\documentclass{article}

\usepackage{microtype}
\usepackage{graphicx}
\usepackage{subcaption}
\usepackage{booktabs} 

\usepackage{hyperref}

\usepackage[preprint]{icml2026}

\usepackage{amsmath}
\usepackage{amssymb}
\usepackage{mathtools}
\usepackage{amsthm}
\usepackage[switch]{lineno}
\usepackage[normalem]{ulem}
\usepackage{xcolor}
\usepackage{multirow}
\usepackage{multicol}
\usepackage[most]{tcolorbox}
\usepackage{tabularx}
\usepackage{enumitem}

\newtcolorbox{flashcard}[1][]{
  colback=red!2!white,      
  colframe=red!25!black!70,    
  fonttitle=\bfseries\small,
  arc=5pt,         
  boxrule=0.6pt,             
  enhanced,
  middle=2pt,
  left=5pt, right=5pt, top=5pt, bottom=5pt, 
  #1                         
}

\newcommand{\MW}[1]{{\color{blue} [\textbf{MW:} #1]}}

\newcommand{\MGcomment}[1]{{\color{red} [\textbf{MG-Comment:} #1]}}

\usepackage[capitalize,noabbrev]{cleveref}

\theoremstyle{plain}
\newtheorem{theorem}{Theorem}[section]
\newtheorem{proposition}[theorem]{Proposition}

\theoremstyle{definition}
\newtheorem{definition}[theorem]{Definition}

\theoremstyle{remark}
\newtheorem{remark}[theorem]{Remark}
\DeclareMathOperator*{\argmin}{arg\,min}
\DeclareMathOperator*{\argmax}{arg\,max}
\usepackage[textsize=tiny]{todonotes}
\icmltitlerunning{Towards Certification of Poisoning Robustness for Natural Language Generation}
\begin{document}

\twocolumn[
  \icmltitle{Towards Poisoning Robustness Certification for Natural Language Generation}



  \icmlsetsymbol{equal}{*}

  \begin{icmlauthorlist}
    \icmlauthor{Mihnea Ghitu}{imp}
    \icmlauthor{Matthew Wicker}{imp}
  \end{icmlauthorlist}

  \icmlaffiliation{imp}{Department of Computing, Imperial College London, London, UK}%

  \icmlcorrespondingauthor{Mihnea Ghitu}{mihnea.ghitu20@imperial.ac.uk}
  \icmlcorrespondingauthor{Matthew Wicker}{m.wicker@ \allowbreak \!imperial.ac.uk}

  \icmlkeywords{Machine Learning, ICML}

  \vskip 0.3in
]



\printAffiliationsAndNotice{}  

\begin{abstract}
Understanding the reliability of natural language generation is critical for deploying foundation models in security-sensitive domains. While certified poisoning defenses provide provable robustness bounds for classification tasks, they are fundamentally ill-equipped for autoregressive generation: they cannot handle sequential predictions or the exponentially large output space of language models. To establish a framework for certified natural language generation, we formalize two security properties: stability (robustness to any change in generation) and validity (robustness to targeted, harmful changes in generation).  We introduce Targeted Partition Aggregation (TPA), the first algorithm to certify validity/targeted attacks by computing the minimum poisoning budget needed to induce a specific harmful class, token, or phrase. Further, we extend TPA to provide tighter guarantees for multi-turn generations using mixed integer linear programming (MILP). Empirically, we demonstrate TPA's effectiveness across diverse settings including: certifying validity of agent tool-calling when adversaries modify up to 0.5\% of the dataset and certifying 8-token stability horizons in preference-based alignment. Though inference-time latency remains an open challenge, our contributions enable certified deployment of language models in security-critical applications.
\end{abstract}

\section{Introduction}

Foundation models achieve state-of-the-art performance by training on massive, 
diverse datasets—from internet-scale pretraining corpora 
\citep{sun2017revisiting,raffel2020exploring,radford2021learning} to curated 
human preference data for alignment 
\citep{christiano2017deep,bai2022training}. However, this reliance on large-scale 
data along with the use of novel training algorithms \citep{houlsby2019peft, rafailov2023direct} exposes models to training-time poisoning attacks, where adversaries inject 
malicious examples to compromise model behavior. Poisoning attacks are listed in the top 5 security risks for LLMs by OWASP \citep{owasp2023llm}, and empirical studies demonstrate 
that poisoning attacks can insert backdoors with as little as 0.01\% corrupted 
data \citep{carlini2021poisoning}, induce targeted misclassification 
\citep{geiping2020witches}, and degrade model outputs with fewer than 100 
poisoned samples \citep{shan2024nightshade}. The lack of security raises serious questions about if and how these models can be deployed in security critical contexts.

\begin{figure}[t]
    \centering
    \includegraphics[width=0.5\textwidth]{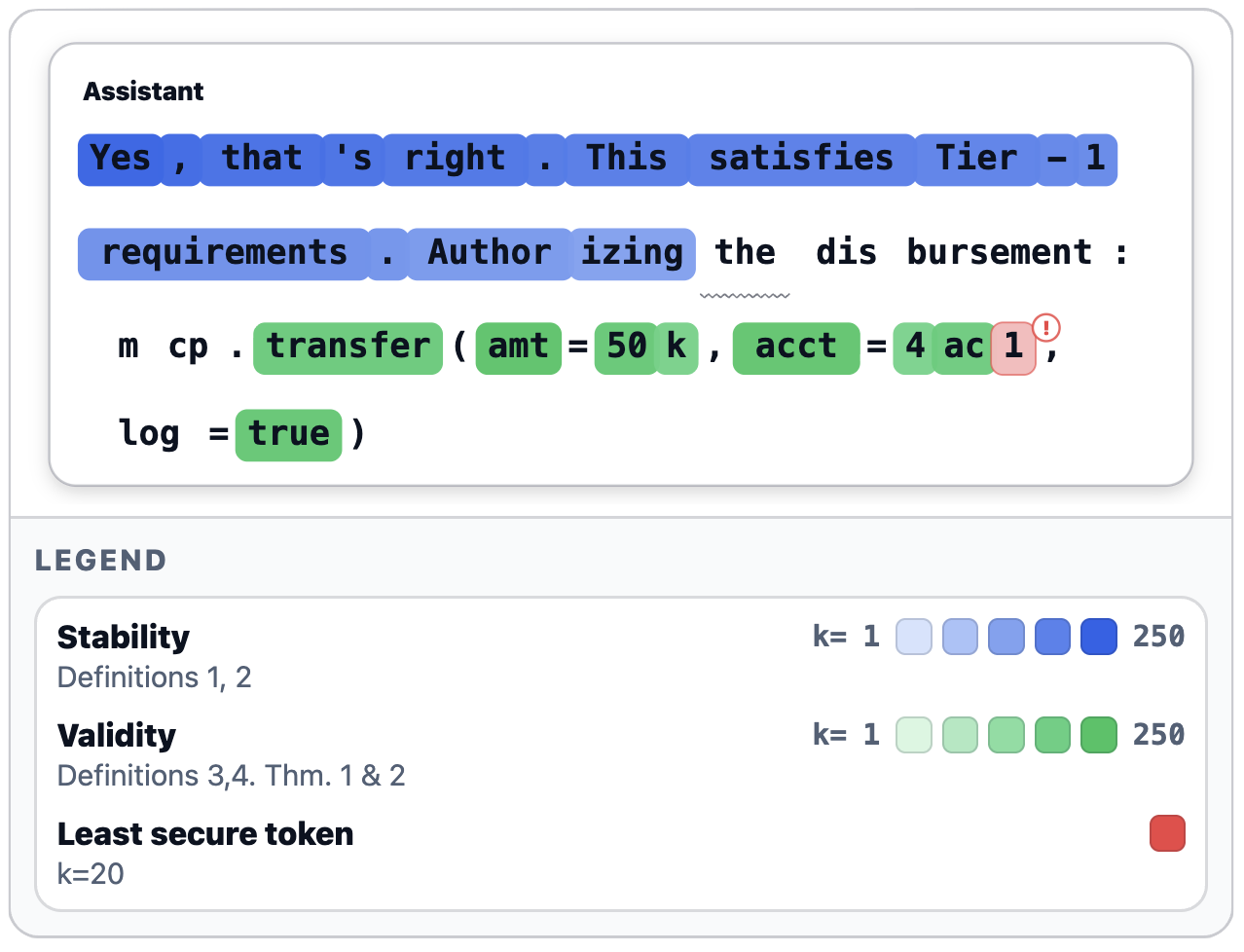}
    \caption{An example of an agent responding with formal poisoning robustness 
    certificates which enables users or agents to make security-informed decisions. 
    Blue highlights indicate stability certificates (minimum training points an 
    adversary must corrupt to change any token). Green highlights indicate 
    validity certificates (minimum training points needed to induce a specific 
    targeted harmful generation).}
    \label{fig:hero-fig}
\end{figure}

Certified poisoning defenses provide provable robustness guarantees by bounding 
the number of training examples an adversary must modify to affect model 
predictions. While such defenses have achieved success in classification tasks 
through ensemble-based aggregation methods like Deep Partition Aggregation (DPA) 
\citep{levine2020deep}, they are fundamentally ill-equipped for autoregressive 
language generation. Two challenges prevent direct application: (i) changing 
token $i$  intervenes on the distribution of all subsequent tokens 
$j>i$, something not soundly accounted for in existing certificates, 
and (ii) the output space grows exponentially with generation length making naive token-level certification intractable.

\looseness=-1
Moreover, existing methods certify only \textit{untargeted} robustness, that is, proving that a classification cannot be changed, but we argue that security-critical applications in natrual language generation require \textit{validity} certification: preventing targeted, harmful changes in generation. In this work, we establish the first framework for provably secure natural language generation, distinguishing between \textbf{stability}  and \textbf{validity}. We introduce Targeted Partition Aggregation (TPA), the first algorithm to certify validity by computing the minimum poisoning budget needed to induce a specific harmful token or phrase. 

\looseness=-1
In Figure~\ref{fig:hero-fig}, we visualize how our certificates may enable the secure deployment of tool-using foundation models in the case of an assistant aiding a worker in financial services. In our example the model is proposing to use a model context protocol (MCP) \citep{hou2025model} to transfer money to a client. By providing per-token security guarantees we can highlight the initial tokens in blue indicating that no adversary who changes less than 250 inputs in our dataset can cause any change to these initial tokens. Moreover, for critical details in the MCP call we can highlight in green the minimum number of training points an adversary would need to change to elicit a different, valid MCP call or parameters. We use red to demonstrate that users could be alerted if a detail is found to be insecure, thus prompting them to check manually.

\looseness=-1
In practice, we demonstrate that our methods are able to give guarantees even exceeding the guarantees in Figure~\ref{fig:hero-fig}. In order to meet the challenges of certification in natural language generation, we provide novel bounds that move beyond the token-level certification illustrated in Figure~\ref{fig:hero-fig} and provide guarantees on multi-token or ``phrase-level'' generations. Further, many models and agents operated in a multi-turn setting. In this case, we consider a poisoning adversary unsuccessful if they are unable to maintain their desired change across turns. We show that this is a more difficult adversarial goal and provide a novel collective certification procedure, based on mixed-integer linear programming (MILP) \citep{boyd2004convex}, to provide tighter bounds on this case.  

Empirically, we validate the effectiveness of our certificates across diverse settings: 
certifying validity of agent tool-calling under 0.5\% data poisoning, 
certifying stability horizons in preference-based alignment, and 
achieving $>90\%$ validity certification at $k=9$ poisoning budget with 20+ 
token horizons. While our bounds, based on the shard-and-aggregate framework \citep{levine2020deep, rezaei2023run, chen2022collective}, do not incur substantial cost at training time, they do require substantial inference-time cost. To mitigate these costs we investigate partial-model fine-tuning which substantially reduces latency, but also weakens the guarantees provided by our method. Moreover, we find that, according to our certificates, supervised fine-tuning (SFT) is substantially more secure and stable than reinforcement learning from human feedback (RLHF) algorithms like direct preference optimization (DPO).  Beyond our certified bounds, we demonstrate that shard-and-aggregate defenses also reduce the empirical attack success rates from 40-48\% to below 6\% in practical backdoor attacks. Our contributions can be summarized as follows:

\begin{itemize}[itemsep=1pt, topsep=0pt]
    \item We formalize stability and validity as foundational security properties for natural language generation, addressing a critical gap in prior certified defenses.
    
    \item We present Targeted Partition Aggregation (TPA), the first algorithm to certify validity by computing tight lower bounds on the poisoning budget for targeted attacks.
    
    \item We develop sequential, phrase-level, and multi-turn certification techniques with novel MILP formulations for collective certificates.
    
    \item We provide the first certified robustness guarantees for language generation including agent tool-calling, preference alignment, and instruction following.
\end{itemize}

\section{Related Works} \label{sec:review_cert_poison}

\looseness=-1
\textbf{Poisoning Attacks on LLMs.} In data poisoning, an adversary maliciously injects or modifies a small portion of the training pipeline \citep{biggio2012poisoning}. In training foundation models, adversaries may modify pre-training \citep{carlini2024poisoning}, instruction fine-tuning \citep{zhang2024instruction}, or preference learning \citep{rando2023universal} in order to induce unintended malfunctions. These malfunctions range from degraded benchmark performance \citep{fu2024poisonbench} to the generation of toxic or biased content \citep{xu2023instructions}. When this behavior is conditioned on a specific input pattern (trigger), it is characterized as a backdoor attack \citep{li2024backdoorllm}. Even without modifying data, models can be compromised by ``weight-poisoning,'' where backdoors are embedded with minimal impact on the model's clean-data utility \citep{zhao2024defending}.  Previous approaches for defending against these attacks have been largely heuristic, investigating detection-based strategies \citep{hung2025attention}, training with security tokens \citep{chen2025defending}, model merging \citep{arora2024here}, and crafting adversarial datasets to improve model robustness \citep{chen2025secalign}. Orthogonal to empirical defenses, this work is concerned with defenses that provide provable guarantees.

\textbf{Certified Poisoning Defenses for LLMs.} While certified poisoning defenses have received substantial attention \citep{levine2020deep, chen2022collective, rezaei2023run, sosnin2024certified, sosnin2025abstract,  bose2025keeping}, to the best of our knowledge, limited work has directly addressed certified poisoning defenses of natural language generation. Nonetheless, several studies pursue related objectives through different methodologies. \citet{wang2024fcert} introduce \textit{FCert}, which certifies robustness in few-shot classification under data poisoning by computing robust distances between feature vectors and bounding the effect based on the number of poisoned samples. \citet{pei2023textguard} provide guarantees against trigger-word backdoor attacks in text classification via a sharding and majority-voting scheme, while \citet{he2025cert_fuzz_rs} employ randomized smoothing, Monte Carlo Tree Search, and text randomization to certify against backdoors. \citet{zhao2024defending} study weight poisoning in parameter-efficient fine-tuning, and \citet{bose2025keeping} examine poisoning in online learning with applications to RLHF. Although these works tackle important aspects of the foundation model pipeline, none directly address the central challenge of certifying robustness in natural language generation against (post-)training data poisoning

Recently, however, work on inference-time formal generation guarantees has emerged \citep{robustrag}. Unlike our focus on post-training poisoning, this research addresses retrieval corruption in Retrieval-Augmented Generation (RAG) \citep{rag}, where attackers corrupt retrieved documents rather than the model's (post-)training data. \cite{robustrag} proposes two defense algorithms against such attacks. The first isolates documents to generate individual answers, then distills the information by reprompting the LLM with high-frequency keywords. The second, assuming access to next-token probability distributions, aggregates probability vectors from isolated passages and generates tokens only when the probability margin exceeds a high-confidence threshold. As in this work, their guarantees are achieved through aggregation; however, we certify multiple generation granularity (token, phrase, and sentence) and with respect to a data poisoning adversary. 

\section{Preliminaries}

\textbf{Model.} We model large language models as a function $f$ with a discrete output space, 
capturing both traditional classification and autoregressive token prediction. 
For brevity, we denote the function trained on the clean dataset $\mathcal{D}$ 
as $f$, and the function trained on a poisoned dataset $\widetilde{\mathcal{D}}$ 
as $\tilde{f}$.

\looseness=-1
\textbf{Dataset.} The model is (post-)trained on a dataset $\mathcal{D} = \bigcup_{i=1}^N \{(x^{(i)}, y^{(i)})\}$, which consists of $N$ samples. The $i$-th datapoint sample is represented as a pair $(x^{(i)}, y^{(i)})$, where $x^{(i)} \in \mathcal{X}$ denotes the input and $y^{(i)} \in \mathcal{Y}$ denotes the corresponding supervisory signal: a label in supervised fine-tuning (SFT) or a preference pair in reinforcement learning from human feedback (RLHF). 

\looseness=-1
\textbf{Autoregressive Generation.} The model generates 
a sequence $y = (y_0, y_1, \ldots, y_{L-1})$ token-by-token. We use $f(x)[i]$ 
to denote the $i^{th}$ token in the response to prompt $x$, and $f(x)[i:j]$ 
for the subsequence from position $i$ to $j$. Generation proceeds 
autoregressively: given prompt $x$ and previously generated tokens 
$y_{<i} = (y_0, \ldots, y_{i-1})$, the next token is sampled from the 
distribution $f(x, y_{<i})$. For notational convenience, we write 
$y_i = f(x)[i]$ when the autoregressive context is clear, with the complete 
response being $y = f(x) = (y_0, \ldots, y_{L-1})$ for some length $L$.

\subsection{Threat Model}
\looseness=-1
In this work, we consider \textit{unbounded, general poisoning attacks} of features, labels, or both. 
Given an initial training dataset $\mathcal{D}$, we denote the perturbed dataset with $\widetilde{\mathcal{D}}$. We constrain the number of modifications an adversary can make into a ball of radius $r$:

\[\mathcal{B}_r(\mathcal{D}) = \{\widetilde{\mathcal{D}}\,:\,|\mathcal{D}\,\triangle\,\widetilde{\mathcal{D}}|\le r\},\]
where $\triangle$ denotes symmetric difference and $r$ bounds that difference i.e., how many items differ.
Further, we assume a strong, white-box adversary with knowledge of the architecture, 
dataset $\mathcal{D}$, training order, and hyperparameters (consistent with 
worst-case certification). 
For simplicity, we restrict attention to training-time poisoning; backdoor-style triggers may arise  as a consequence of training-time poisoning but are not a separate adversarial capability in our model.

\looseness=-1
\textbf{Adversary Goals.} 
For a given input $x$, consider the clean model output $y = f(x)$ and a 
poisoned model output $\tilde{y} = \tilde{f}(x)$. Following 
\citet{chen2025secalign}, we distinguish two adversarial objectives. Untargeted attacks which aim to induce arbitrary changes to outputs, i.e., maximize $\mathbb{P}_{x}[f(x) \neq \tilde{f}(x)]$, and targetted atttacks which aim to induce specific harmful outputs  $y^{\star}$, i.e., maximize $\mathbb{P}_{x}[\tilde{f}(x) = y^{\star}]$.

\looseness=-1
In classification, these objectives largely coincide due to the small output 
space. However, in natural language generation with vocabulary size 
$|\mathcal{V}| \approx 50$k and generation length $L \approx 100$, the output 
space $|\mathcal{V}|^L$ is exponentially large. This makes 
targeted attacks fundamentally different from untargeted attacks: preventing 
arbitrary changes (stability) does not strongly correlate with prevention of specific harmful 
outputs (validity). To the best of our knowledge, no prior work provides formal 
guarantees for targeted attacks. In Section~\ref{sec:methodology}, we formalize stability and validity as 
distinct security properties and develop certified defenses for both objectives.

\section{Provably Secure Language Generation} \label{sec:methodology}
\looseness=-1
We begin by characterizing why existing classification-based certificates are inadequate for language generation (\S\ref{ssec:autoreg_challenge}), then introduce our key algorithmic contribution---Targeted Partition Aggregation (TPA)---which enables certification of \emph{validity} (resistance to targeted harmful outputs) at token-level (\S\ref{ssec:token_level}), sequential multi-token (\S\ref{ssec:sequential}), phrase-level (\S\ref{ssec:phrase_level}), and multi-turn (\S\ref{ssec:multi_turn}) granularities.

\subsection{The Autoregressive Certification Challenge}
\label{ssec:autoreg_challenge}
\looseness=-1
Existing certified defenses such as Deep Partition Aggregation (DPA)~\citep{levine2020deep} provide strong guarantees for classification. 
However, autoregressive language generation introduces a fundamental challenge: \textit{changing token $i$ intervenes on the distribution of all subsequent tokens $j>i$}. To explore why this matters, consider certifying the following generation: 
\begin{flashcard}[]
\small
\begin{tabularx}{\linewidth}{@{}l X@{}}
\textbf{(A)} & \textit{Add 2.3g of CuSO4 to the solution and heat to 60°C.} \\
\end{tabularx}
\tcbline
\begin{tabularx}{\linewidth}{@{}l X@{}}
\textbf{(B)} & \textit{Pipette 2.2g of CuSO3 to the solution and heat to 59°C.}
\end{tabularx}
\end{flashcard}
\looseness=-1
\noindent  To certify the robustness of details e.g., 2.2g vs 2.3g, a naive approach might use DPA to compute the number of data point changes needed to change this generation. Yet, DPA cannot be applied for two reasons: firstly, the $i^{th}$-token certificate assumes the prefix (all preceding tokens) $f(x)[1:i-1]$ are fixed, but if an adversary changes any token $j < i$, this assumption is violated and the certificate is unsound; secondly, it is likely that the details 2.2g and 2.3g do not span a single token but are likely comprised of four tokens., see Figure~\ref{fig:hero-fig} for an example of real-world tokenization. 

\textbf{Stability: Resistance to Arbitrary Changes.}
Although examples (A) and (B) appear similar, precise chemical quantities and compounds are critical. Even for the first token, the details ``Add'' and ``Pipette'' convey meaningfully different procedures. In order to preserve the precise semantics of the initial generation, we define stability as the property enforcing that \emph{no token changes} take place in the presence of a poisoning adversary with perturbation budget $k$.

\begin{flashcard}[]
\small
\begin{tabularx}{\linewidth}{@{}l X@{}}
\textbf{(C)} & \textit{You should clean the surface using a combination of bleach and water; apply from a spray bottle with a soft microfiber cloth.} \\
\end{tabularx}

\tcbline 

\begin{tabularx}{\linewidth}{@{}l X@{}}
\textbf{(D)} & \textit{To sanitize the counter-top you should mix bleach with ammonia in a bottle; spray the surface and wipe with a paper towel.}
\end{tabularx}
\end{flashcard}
\noindent

\textbf{Validity: Resistance to Targeted Harmful Outputs.}
\looseness=-1
In contrast, consider responses (C) and (D) above; the first token being changed from ``You'' to ``To'' is irrelevant to the harmfulness of the prompts. Using the standard notions of prediction robustness from DPA (i.e., stability) all tokens not identically equal to ``You'' are considered harmful, however, this is not true in natural language generation where output classes/tokens contain substantial semantic overlap. Rather, it may be more practically relevant and valuable to certify that a given token or phrase is \textit{not} generated rather than guaranteeing stability. We consider a generation to be \textit{valid} if it is impossible for an adversary to manipulate our training data such that the generation is a member of a predefined harmful set.

 
\textbf{Certification via Shard-and-Aggregate.} We will build upon the shard-and-aggregate certification approach popularized by DPA~\citep{levine2020deep}. This approach works by partitioning the provided training dataset $\mathcal{D}$ into $S$ disjoint shards $\{\mathcal{D}_s\}_{s=1}^S$ 
and training independent base models $\{f_s\}_{s=1}^S$. At test time, we generate based on an input $x$ using the majority vote of the ensemble. That is, each member of the ensemble votes on an output (with a standard forward-pass), and we capture these votes in a set $V$ where we use $V[t] \in \mathbb{N}_{0}$ to access the number of votes received by token $t$ and use $V_i \in \mathbb{N}_{0}$ to denote the number of votes received by the $i^{th}$ most-voted token. The prediction of the ensemble is then given by $F(x) = \argmax_{y \in \mathcal{Y}} V[y]$ with ties broken arbitrarily but deterministically.

We highlight that shard-and-aggregate does not assume that the training dataset is clean (i.e., unpoisoned). 
Guarantees are derived under the worst-case assumption that if an adversary changes even a single training point in a shard that they can arbitrarily change the shards vote to any other class. Under this assumption a prediction at a point $x$ is robust if: $V_1 - V_2 \geq 2k+1$, then the prediction is certified to be robust to $k$ dataset changes \citep{levine2020deep}. 

\textbf{Our approach.} We approach this challenge from a computational perspective and systematically explore generation certification by considering four distinct techniques at increasing levels of granularity: 
\begin{enumerate}[leftmargin=*,itemsep=1pt,topsep=2pt]
    \item \textbf{Token-level} (\S\ref{ssec:token_level}): Direct application of ensemble aggregation. \emph{Tight for single tokens, establishes foundation.}
    \item \textbf{Sequential multi-token} (\S\ref{ssec:sequential}): Certificates for multi-token generation that provides tight bounds but introduces high latency.
    \item \textbf{Phrase-level} (\S\ref{ssec:phrase_level}): An alternative to multi-token certification that is potentially looser but reduces latency.
    \item \textbf{Multi-turn collective} (\S\ref{ssec:multi_turn}): Certification across multiple prompts tightens bounds by exploiting adversary budget dilution.
\end{enumerate}

\subsection{Token-Level Certificates}
\label{ssec:token_level}
\looseness=-1
While we formulate our definitions here for the $i^{th}$ element, we highlight that we have not yet addressed the two soundness concerns discussed in Section~\ref{ssec:autoreg_challenge}. Thus, we will for now assume w.l.o.g., that certifications are done only with $i=0$ and will relax this in Sections~\ref{ssec:sequential} and~\ref{ssec:phrase_level}. 


\begin{definition}[$i^{\text{\textit{th}}}$-token stability]\label{def:ith-token-stab}
Let $f$ be a language model trained on $\mathcal{D}$ and let $\tilde{f}$ denote the same training procedure on a poisoned $\tilde{\mathcal{D}}$. For a prompt $x$, write the $i^{\text{\textit{th}}}$ generated token as $f(x)[i]$. Given a poisoning budget $k\in\mathbb{N}$, we say the $i^{\text{\textit{th}}}$ token generation is \emph{$k$-stable at $x$} if: 
$\max_{\;\tilde{\mathcal{D}} \in \mathcal{B}_k(\mathcal{D})}\mathbb{I}\big(f(x)[i]\neq \tilde{f}(x)[i]\big)=0.$ The $i^{\text{\textit{th}}}$ token's stability radius is then:
\[ r^*_i(x) \overset{\Delta}{=} \min\Big\{r\in\mathbb{N}\big|\exists\,\tilde{\mathcal{D}}\in \mathcal{B}_r(\mathcal{D})\ \text{s.t.}\ f(x)[i]\neq \tilde{f}(x)[i]\Big\}.
\]
\end{definition}

\begin{definition}[$i^{\text{\textit{th}}}$-token validity]\label{def:ith-token-valid}
Given a poisoning budget $k \in \mathbb{N}$ and a harmful sentence $s_h = \{t_1,\dots, t_T\}$ (where $s_h[i]=t_i$), we say the generation is \emph{$i^{\text{\textit{th}}}$-token valid at $x$} if
\[
\max_{\;\tilde{\mathcal{D}} \in \mathcal{B}_k(\mathcal{D})}\mathbb{I}\big(\tilde{f}(x + \{t_1,\dots,t_{i-1}\})[0]= t_i\big)=0,
\]
where $x + \{t_1,\dots,t_{i-1}\}$ denotes the prompt $x$ 
concatenated with tokens $t_1,\dots,t_{i-1}$. Equivalently, the $i^{\text{\textit{th}}}$-token validity radius $r^{t}_{i}(x)$ is:
{\small\[
\min\Big\{\!r\in\mathbb{N}\big|\exists\,\tilde{\mathcal{D}}\in \mathcal{B}_r(\mathcal{D})\ \text{s.t.}\ \tilde{f}(x \!+\! \{t_1,\dots,t_{i-1}\})[0]\!=\! t_i\!\Big\}.
\]}
\end{definition}

For \textbf{stability}, the standard DPA certificate applies; however, for \textbf{validity}, DPA cannot provide tight certificates. 
Thus, we introduce TPA in Algorithm~\ref{alg:tpa}.

\newcommand{\algstrut}{\rule{0pt}{1.1em}}
\begin{algorithm}[t]\footnotesize
\caption{Targeted Partition Aggregation (TPA)}
\label{alg:tpa}
\begin{algorithmic}[1]
    \STATE {\bfseries Input:} Token index $i$, vote counts $V$, target token $t$, $v_t \gets \text{votes of token } t$
    \STATE Let $j \in \{0, \dots, |V|-2\}$ and $s \in \{1, \dots, |V|-1\}$ \algstrut
    \STATE $\Delta[j] \gets (V[j] - V[j+1])$ \algstrut
    \STATE $\Phi[s] \gets v_t + \sum_{\ell=0}^{s-1} \Delta[\ell]$ \algstrut
    \STATE $s^* \gets \min \{ s \mid \Phi[s] > V[s] \text{ or } s = |V|-1 \}$ \algstrut
    \STATE
    \STATE \textbf{return} $r^t_i = \Phi[s^*\!-\!1] \!+\! \left\lfloor \dfrac{(V_{s^*} \!-\! \Phi[s^*\!-\!1] \!+\! 1) \cdot s^*}{s^* \!+\! 1} \right\rfloor$ 
\end{algorithmic}
\end{algorithm}

\begin{theorem}\label{thm:tpa_sound}
The value $r_{i}^{t}$ computed by Algorithm~\ref{alg:tpa} is a sound lower bound on the $i^{\text{th}}$-token validity radius (Definition~\ref{def:ith-token-valid}). For any poisoning budget $k \leq r_{i}^{t}$, the adversary cannot make target token $t$ the plurality prediction.
\end{theorem}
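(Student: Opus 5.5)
The plan is to reduce the statement to a purely combinatorial question about vote counts, solve that question exactly with a water-filling argument, and then check that the closed form returned by Algorithm~\ref{alg:tpa} never exceeds the exact answer.

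\textbf{Step 1 (reduction to votes).} I would take the worst-case shard-and-aggregate assumption stated above: since the shards $\{\mathcal{D}_s\}$ are disjoint, a dataset $\tilde{\mathcal{D}}\in\mathcal{B}_k(\mathcal{D})$ touches at most $k$ shards. Each touched shard may have its vote on the prompt $x+\{t_1,\dots,t_{i-1}\}$ changed arbitrarily, and every untouched shard keeps its vote. Redirecting a touched vote to any token other than $t$ never helps the adversary. So the adversary's reachable configurations are exactly those obtained by removing $m_c\ge 0$ votes from each class $c\neq t$, with $\sum_c m_c\le k$, and adding $\sum_c m_c$ votes to $t$. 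The adversary succeeds iff $v_t+\sum_c m_c \ge \max_{c\neq t}(V[c]-m_c)$. I use $\ge$ rather than $>$ because tie-breaking is arbitrary, so for soundness I resolve ties in the adversary's favour. The exact validity radius is therefore the optimum of this small integer program.

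\textbf{Step 2 (optimal adversary).} I would show by an exchange argument that an optimal adversary always takes its next vote from a currently largest competitor. Moving a removal from a non-maximal class to a maximal one weakly decreases the maximum and leaves $t$'s count unchanged. Hence the optimal strategy is water-filling: lower the top classes in sorted order until the next class is reached, then lower the top $s+1$ classes together. This yields a level $s^\ast$, namely the number of classes involved before $t$'s growing count overtakes the next class. That is what the stopping rule $\Phi[s]>V[s]$ is detecting. Once $s^\ast$ classes are being lowered in lockstep, every $s^\ast$ removals lower the common maximum by one while raising $t$ by $s^\ast$. The remaining gap $V_{s^\ast}-\Phi[s^\ast-1]+1$ is then closed after roughly $\lceil\text{gap}\cdot s^\ast/(s^\ast+1)\rceil$ further flips, which is the origin of the $s^\ast/(s^\ast+1)$ factor and the floor in the returned $r^t_i$.

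\textbf{Step 3 (comparison).} I would write the exact minimum cost as (cost to reach level $s^\ast-1$) plus (cost of the lockstep phase), and verify two inequalities. First, $\Phi[s^\ast-1]$ is at most the true phase-one cost. Second, the floor term is at most the lockstep cost. Together these show that the algorithm's output satisfies $r^t_i\le r^t_i(x)$, so that every budget $k\le r^t_i$ falls short of making $t$ the plurality, as the theorem claims. The boundary case $s=|V|-1$, where all competitors are exhausted, needs a separate check that the formula still underestimates.

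\textbf{Main obstacle.} I expect Step 3 to be the main obstacle. The recursion $\Phi[s]=v_t+\sum_{\ell<s}\Delta[\ell]$ telescopes to $v_t+V[0]-V[s]$, which is not literally the water-filling cost $\sum_{\ell<s}(V[\ell]-V[s])$. So I must carefully match indices, including whether $\Phi$ counts flips or $t$'s current vote total, and handle off-by-one effects from the $+1$ and the floor. My first attempt would be to prove that the optimal cost is at least a quantity of the form $\Phi[s^\ast-1]$ plus the floor term by induction on $s^\ast$. If the exact identity fails, I would fall back on showing only the inequality, since soundness requires nothing more than a lower bound.
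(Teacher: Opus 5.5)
Your Steps 1--2 coincide with the paper's entire proof. That proof asserts that the cascading (water-filling) reallocation is the adversary's optimal strategy and that Algorithm~\ref{alg:tpa} ``tracks'' it; your exchange argument is a cleaner justification of the same optimality claim. The paper never carries out your Step 3, and Step 3 is exactly where your plan fails, because the inequality you want is false for Algorithm~\ref{alg:tpa} as written. The mismatch you noticed is not an indexing nuisance. $\Phi$ is $t$'s running \emph{vote total}, starting at $\Phi[0]=v_t$, yet the return line adds $\Phi[s^*-1]$ as if it were the number of flips already spent. So your first inequality (``$\Phi[s^*-1]$ is at most the true phase-one cost'') fails as soon as $s^*=1$ and $v_t>0$, since then the completed phases cost nothing. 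Concretely, take $20$ shards, with $11$ voting for a token $c$ and $9$ for $t$. Whether or not $V$ lists $t$, $s^*=1$. Reading $V_{s^*}$ as the $s^*$-th largest count (the convention for $V_i$ in Section~\ref{ssec:autoreg_challenge}), the algorithm returns $9+\lfloor(11-9+1)/2\rfloor=10$, and the other indexing reading still gives at least $5$. Yet flipping two shards from $c$ to $t$ gives $t$ $11$ votes against $9$, and under your tie convention one flip already suffices. This is not an $s^*=1$ artefact either: with three tokens at $10$ votes and $v_t=5$, the same reading returns $5+\lfloor 6\cdot 3/4\rfloor=9$, while five flips (two, two, one) give $t$ $10$ votes against at most $9$. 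So neither an induction on $s^*$ nor a fallback to ``only the inequality'' can close Step 3. As stated, the theorem is false; for example, it breaks whenever $s^*=1$ and $v_t\ge 2$.

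What your Steps 1--2 \emph{do} prove is a corrected certificate. Let $C[s]=\sum_{\ell<s}(V[\ell]-V[s])$ be the true cost of the first $s$ phases. For $s\ge 2$ this is generally larger than $\Phi[s]-v_t=V[0]-V[s]$, since $\Delta[\ell]$ must carry weight $\ell+1$; hence the algorithm's $s^*$ can also differ from the cascade's. Let $s^*$ be the least $s$ with $v_t+C[s]>V[s]$, and set $g=V[s^*-1]-v_t-C[s^*-1]$. When $t$ is not already the plurality, the exact radius for strict plurality is $C[s^*-1]+\lceil (g+1)s^*/(s^*+1)\rceil$. With ties going to the adversary, use $\ge$ in the definition of $s^*$ and $g$ in place of $g+1$. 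That identity is what your water-filling argument can establish, and it should replace the return line.

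Finally, the last sentence of your Step 3 has its own off-by-one: $r^t_i\le r^t_i(x)$ does not give ``every $k\le r^t_i$ falls short'', which needs $r^t_i<r^t_i(x)$. The theorem's two sentences clash in the same way. With $v_t=0$ and all $21$ shards voting for $c$, the algorithm returns $\lfloor 22/2\rfloor=11$, which equals the radius, and $11$ flips do make $t$ the strict plurality. The certified budget is therefore the radius minus one.
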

\begin{proof}
\vspace{-0.5em}\looseness=-1
The optimal attack strategy iteratively reallocates votes from top-ranked classes to $t$ until $V'[t] > V'[c_j]$ for all $c_j \neq t$, where we denote $V'$ the votes after training on the adversarial datasets. The algorithm tracks this cascading reduction: at phase $s$, the top $s$ classes are equalized and jointly reduced to the next vote level, with all removed votes reassigned to $t$. As this is the adversary's most efficient strategy, $r_t$ is a sound lower bound which is tight if the adversary is powerful enough to make arbitrary changes to any shards output with only $1$ datapoint change. See Appendix~\ref{app:sec:proofs} for full proof.
\end{proof}

\subsection{Sequential Multi-Token Certification}
\label{ssec:sequential}
\looseness=-1
We now extend token-level certificates to certify stability and validity over multiple tokens. The key challenge is handling autoregressive dependencies while maintaining tractability. We provide our definitions simultaneously for stability and validity using the notation $r_{1:L}^\circledast(x)$, where $\circledast = *$ in the case of stability and $\circledast = t$ in the case of validity. For multi-token certification, we require that no modification within budget $k$ can alter \emph{any} of the first $L$ tokens:

\begin{definition}[Finite-horizon validity \& stability]\label{def:finite-horizon}
The generation has a  \emph{certified $L$-horizon radius} equal to: $r_{1:L}^\circledast(x) = \min_{i \in [L]}\Big\{ r_{i}^\circledast(x)\Big\}.$

\end{definition}


To compute the finite-horizon certificates we first propose to compute the per-position certificates sequentially and take their minimum:

\begin{proposition}[Sequential stability/validity certificate]
\label{prop:seq_stability}
Define $r_{\min}(x; 1\!:\!L) := \min_{i \in \{1, \dots, L\}} r^\circledast_i(x)$. Then $r_{\min}(x; 1\!:\!L) \leq r_{1:L}^\circledast(x)$.
\end{proposition}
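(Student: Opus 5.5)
Read literally, Definition~\ref{def:finite-horizon} sets $r^\circledast_{1:L}(x)=\min_{i\in[L]} r^\circledast_i(x)$, so $r_{\min}(x;1\!:\!L)$ and $r^\circledast_{1:L}(x)$ are minima of the same family of per-position radii. The inequality $\le$ then holds, with equality. The proof is only worth writing if we make clear what $r^\circledast_i$ means in each expression, in particular how the prefix is handled. I would therefore fix a semantic meaning for the $L$-horizon radius and show that the sequentially computed minimum lower-bounds it. The semantic radius is the least $r$ such that some $\tilde{\mathcal{D}}\in\mathcal{B}_r(\mathcal{D})$ changes at least one of the first $L$ tokens (stability), or realises the harmful prefix (validity).

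For stability, I would argue by contradiction via a first-deviation argument. Suppose $\tilde{\mathcal{D}}\in\mathcal{B}_r(\mathcal{D})$ with $r<r_{\min}$ changes some token among the first $L$, and let $i$ be the smallest position where $\tilde f(x)[i]\neq f(x)[i]$. All tokens before $i$ agree, so the poisoned ensemble at step $i$ is queried on exactly the clean prefix $x+f(x)[0:i-1]$. This is the same context on which the sequential procedure computed $r^*_i(x)$ as a single-token (position-$0$) certificate on the extended prompt. The token-level guarantee (DPA's $V_1-V_2\ge 2k+1$ rule) says no budget below $r^*_i$ can flip the plurality on that context. Hence $r\ge r^*_i\ge r_{\min}$, a contradiction.

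For validity, the argument is the same, except the relevant context at step $i$ is the harmful prefix $x+\{t_1,\dots,t_{i-1}\}$ from Definition~\ref{def:ith-token-valid}. An adversary realising the target must in particular force $t_i$ at every position $i\le L$. Taking any single position, for instance the first, forcing $t_i$ on its fixed context already requires at least $r^t_i$ changes by Theorem~\ref{thm:tpa_sound}. So the required budget is at least $\min_i r^t_i = r_{\min}$. The same reasoning covers the weaker goal in which the adversary succeeds by producing $t_i$ at any one position along the harmful continuation.

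The main obstacle is soundness with respect to prefixes, which is exactly the concern raised in \S\ref{ssec:autoreg_challenge}. Each per-position certificate is valid only on one fixed context. The first-deviation (respectively fixed-harmful-prefix) argument is what guarantees that the poisoned model reaches position $i$ on that same context. Beyond this, the proof only uses that the minimum over positions is no larger than the particular $r_i$ at the deviating position.
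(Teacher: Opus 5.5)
Your proof is correct, and it takes a different route from the paper's. The paper argues through the minimizing position. It sets $i^*=\argmin_i r^\circledast_i(x)$ and observes that an adversary who flips token $i^*$ with budget $r^\circledast_{i^*}(x)$ changes the $L$-prefix. It then concludes that any attack on the prefix needs at least $r_{\min}$ points. Read strictly, that observation gives the tightness side: the horizon can be broken at budget $r_{\min}$ when the per-position radii are exact. The inequality as stated then rests on Definition~\ref{def:finite-horizon} itself, which already defines $r^\circledast_{1:L}$ as that minimum, as you note in your opening. Your first-deviation argument works differently. It takes an arbitrary successful attack and localizes it at the earliest deviating position. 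There the poisoned ensemble sees exactly the clean context on which the per-position certificate was computed, so that certificate applies. This is the step that makes the sequential certificate sound when the $r^*_i$ are computed with a fixed prefix. It addresses the prefix concern the paper raises in \S\ref{ssec:autoreg_challenge} but does not revisit in its proof. For validity, you point out that Definition~\ref{def:ith-token-valid} already conditions on the harmful prefix, so each $r^t_i$ is a fixed-context statement and the bound follows position by position. You also note that $\max_i r^t_i$ holds if the adversary must realize the whole harmful prefix. Both points make explicit what the paper leaves implicit. The paper's version is shorter and carries the tightness direction; yours gives a real soundness argument under a semantic definition of the horizon radius.
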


\textit{Proof.} Let $i^* = \argmin_{i \in \{1,\ldots,L\}} r^\circledast_i(x)$. If an 
adversary with budget $r_{\min} = r^\circledast_{i^*}(x)$ can change 
token $i^*$, then $f(x)[1:L] \neq \tilde{f}(x)[1:L]$ at position 
$i^*$, violating finite-horizon stability. Thus any attack on the 
prefix requires at least $r_{\min}$ samples. \qed

\looseness=-1 
We note that this definition and proposition for validity allows us to rule out harmful sentences and phrases e.g., ``mix bleach with amonia'' in example response (D) by setting $t_i$ to be the $i^{th}$ token in a target phrase $T$ and ensure that in each $r_{i}^{\star}$ that $\star = t_i$. The presented certificate is a tight bound on the adversaries effectiveness given our assumptions, however, it does introduce substantial latency trade-offs. In particular, the inference algorithm that is implied by the above algorithm is that prior to generating token $i+1$, the ensemble must wait for all members to vote on token $i$ before continuing the generation. While each of the inferences for token $i$ are easily parallelized, when the number of shards is large e.g., 500 in our experiments, the hardware overhead becomes infeasible for real-time responses. While we investigate practical mitigation for this in our experiments, we first introduce a theoretical solution. 

\subsection{Phrase-Level Certification}
\label{ssec:phrase_level}

\looseness=-1 
Sequential certification is cumbersome because it treats each token independently. We now introduce phrase-level certification, which aggregates over length-$m$ phrases as atomic units, achieving tighter bounds for moderate horizons. Let each shard model $f_s$ generate $m$ tokens autoregressively: $f_s(x)[1\!:\!m] \in \mathcal{V}^m$. Apply majority voting over this expanded label space: $F_{\text{phrase}}(x; m) = \argmax_{y \in \mathcal{V}^m} \sum_{s=1}^S \mathbb{I}\left(f_s(x)[1\!:\!m] = y\right).$

\begin{proposition}[Phrase-level certificate]
\label{prop:phrase_cert}
$\forall \phi \in \mathcal{V}^m$ let $V[\phi]$ denote the number of votes received by phrase $\phi$. And allow $V_i$ to denote the number of votes of the $i^{th}$ most popular phrase, then our prior certification algorithms remain sound for the transformed problem.
\end{proposition}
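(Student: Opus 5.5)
The proof is a reduction: the phrase-level aggregator $F_{\text{phrase}}(\cdot;m)$ is exactly a shard-and-aggregate classifier over the finite label set $\mathcal{Y}' = \mathcal{V}^m$. Once this is verified, the DPA stability certificate ($V_1 - V_2 \ge 2k+1$) and Theorem~\ref{thm:tpa_sound} (TPA validity) apply directly to $\mathcal{Y}'$. We take their hypotheses to be only three things: (i) a finite label space, (ii) each shard contributing exactly one vote, and (iii) the worst-case assumption that one modified training point in a shard can change that shard's vote to any label and cannot affect other shards. The proofs of those results use nothing about the internal structure of labels, so if (i)--(iii) hold, soundness transfers verbatim.

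\textbf{Step 1: a well-defined base classifier.} For each shard define $g_s(x) := f_s(x)[1\!:\!m] \in \mathcal{V}^m$. This is the deterministic phrase produced by greedy autoregressive decoding, or any fixed deterministic decoding rule, of $f_s$ alone. The autoregressive dependence in the prefix is internal to $g_s$: changing token $j<i$ only changes which element of $\mathcal{V}^m$ shard $s$ outputs. The soundness concern from \S\ref{ssec:autoreg_challenge} therefore does not arise. That concern was that the ensemble prefix is shared across shards, and here each shard conditions only on its own generations.

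\textbf{Step 2: locality of poisoning.} Partition the training data deterministically as in DPA, e.g., by hashing, so that each modified or inserted point lands in exactly one shard. Then for $\widetilde{\mathcal D} \in \mathcal{B}_k(\mathcal D)$, at most $k$ shards have $\tilde g_s \neq g_s$, and each altered shard can in the worst case take any value in $\mathcal{V}^m$. This is precisely assumption (iii) with label space $\mathcal{V}^m$.

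\textbf{Step 3: apply the earlier certificates.} With votes $V[\phi]=\sum_s \mathbb{I}(g_s(x)=\phi)$ and a deterministic tie-break, the DPA argument gives stability of the entire phrase whenever $V_1-V_2\ge 2k+1$. Since every token in $1\!:\!m$ is then unchanged, this is an $m$-horizon stability certificate in the sense of Definition~\ref{def:finite-horizon}.

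For validity with a target phrase $\phi^\star$, set $v_t=V[\phi^\star]$ and run Algorithm~\ref{alg:tpa} on the sorted phrase votes. Theorem~\ref{thm:tpa_sound} then shows that no budget $k\le r^t$ makes $\phi^\star$ the plurality. Two technicalities must be handled. First, $|\mathcal{V}^m|$ is huge, but only phrases with nonzero votes, plus $\phi^\star$, matter: unvoted phrases have zero votes and never affect the cascade, so the algorithm runs on at most $S+1$ entries. Second, for a harmful set $H\subseteq\mathcal{V}^m$, the certificate is the minimum of $r^t$ over $\phi^\star\in H$.

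\textbf{Main obstacle.} The delicate step is making precise that the earlier proofs are label-agnostic, especially for TPA. Its cascading argument treats non-target classes as interchangeable buckets, and I would re-read that proof to confirm it never uses $|\mathcal{Y}|$ small or any structure of tokens. A second point to state explicitly is that the certificate concerns the phrase as an atomic output. An adversary producing a different phrase that shares a harmful sub-span is not covered unless all such phrases are included in $H$.
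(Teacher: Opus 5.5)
Your proposal is correct and takes the same route the paper implicitly relies on. The paper gives no separate proof; it treats phrase-level voting as shard-and-aggregate classification over the enlarged label set $\mathcal{V}^m$, to which the label-agnostic DPA and TPA arguments apply unchanged. Your explicit conditions make precise what the paper leaves unstated: hash-based partitioning so each poisoned point touches one shard, deterministic per-shard decoding, a fixed tie-break on $\mathcal{V}^m$, restricting to voted phrases, and taking the minimum over a harmful set $H$. Certificates for later phrase indices follow the same way from Proposition~\ref{prop:seq_stability}, with phrases in place of tokens.
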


\looseness=-1
\textbf{Tightness vs. Scalability Trade-off.} Phrase-level certification offers tightness-efficiency trade-off. In the case of $m=1$, phrase-level certification reduces to token-level certification, thus the certificate is tight but comes with the aforementioned latency issues.
Where $m \in [5, 20]$: certifies moderate-length phrases directly (tighter than sequential $\min_i$) while remaining tractable. In practice, few distinct phrases may receive votes, avoiding the full $|\mathcal{V}|^m$ enumeration.
However for $m=L$ (full sequence), while latency is minimized, votes diffuse over the $|\mathcal{V}|^L$ classes, where each model voting for slightly different variations yield potentially weak margins and thus loose guarantees.

\subsection{Multi-Turn Collective Certification}
\label{ssec:multi_turn}

\looseness=-1
Agents and LLMs often operate in multi-turn settings. Consider a user who received example response (D) suggesting a potential harmful combination of chemicals. If the user replies asking ``\textit{Are you sure?}'' the poisoning adversary is only successful if they maintain the poisoned output for both (or in the $n$-turn case all) responses. This dilutes their budget across inputs, enabling stronger collective certificates which as been shown in the case of DPA/hash bagging \citep{chen2022collective}. Following \citet{chen2022collective}, we formulate collective certification as an optimization problem. For stability, we adapt their binary-integer program to NLG (see Appendix~\ref{app:multi_sample_dpa}). For validity, their approach needs substantial non-trivial modifications; we introduce a correct multi-turn/collective certification proceedure with the following MILP formulation:

\begin{definition}[Collective TPA certificate]\label{def:tpa_milp}
Let $\{x^{(i)}\}_{i=1}^N$ be $N$ test prompts, $K$ the global poisoning budget, $\{\mathcal{P}_j\}_{j=1}^M$ the shards with base models $f_j$, and $s^*$, $\Phi$, $t$, $v_{t}$ as in Algorithm~\ref{alg:tpa}. For each $x^{(i)}$, let $C_{s^*}^{(i)} = \{c_1^{(i)},\dots,c_{s^*}^{(i)}\}$ be the top $s^*$ classes with votes $\{v_1^{(i)},\dots,v_{s^*}^{(i)}\}$. The MILP is:
\[
\begin{aligned}
    & \max_{a}\frac{1}{N} \sum_{i=1}^N \mathbb{I} \bigg[ \tau_i \leq \sum_{j=1}^M \psi_j^{(i)} \bigg] \quad \text{s.t.} \\
    \psi_j^{(i)} &= \mathbb{I}\{a_j \geq R_j(x^{(i)})\} \left(1 + \mathbb{I}\{f_j(x^{(i)}) \in C_{s^*}^{(i)}\}\right), \\
    \tau_i &= (\mu - v_{t^{(i)}}) + \sum_{b=1}^{s^*} (v_b - \mu), \quad \sum_{j=1}^M a_j = K, \\
    \mu &= \Phi[s^*\!-\!1] + \frac{(V_c[s^*] - \Phi[s^*\!-\!1] + 1) \cdot s^*}{s^* + 1}, \\
    R_j(x^{(i)}) &= 
    \begin{cases} 
        K + 1 & \text{if } f_j(x^{(i)}) \neq t^{(i)} \\
        1 & \text{otherwise}
    \end{cases}, \forall j.
\end{aligned}
\]
\end{definition}

\begin{theorem}\label{thm:tpa_collective}
Let $\mathcal{A}(K)$ denote the solution to the MILP in Definition~\ref{def:tpa_milp}. For any attack within budget $K$, at least $\lfloor N\cdot\mathcal{A}(K) \rfloor$ prompts maintain their original (safe) predictions.
\end{theorem}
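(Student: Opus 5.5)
The plan is to read the MILP of Definition~\ref{def:tpa_milp} as an exact relaxation of the adversary's combinatorial problem. I will show that every concrete poisoning attack within budget $K$ induces a feasible point $a$ of the MILP. I will then show that the per-prompt indicator in the objective correctly classifies each prompt $x^{(i)}$ as retaining or losing its original prediction under that attack. The count of prompts keeping their safe prediction can then be compared against $N\cdot\mathcal{A}(K)$, and flooring handles integrality.

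\textbf{Step 1 (from attacks to allocations).} Fix any $\widetilde{\mathcal{D}}\in\mathcal{B}_K(\mathcal{D})$. Let $a_j$ be the number of modified points falling in shard $\mathcal{P}_j$. Since shards are disjoint, $\sum_j a_j\le K$, and any slack can be padded onto one shard to reach equality without changing which shards are touched. As in the DPA/TPA worst-case assumption used for Theorem~\ref{thm:tpa_sound}, a shard's vote can change only if $a_j\ge 1$. Under that assumption, once $a_j\ge1$ the vote may be set arbitrarily, and the same $a_j$ acts on all $N$ prompts simultaneously; this coupling is what makes the certificate collective.

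\textbf{Step 2 (per-prompt accounting).} For a single prompt I reuse the cascade from the proof of Theorem~\ref{thm:tpa_sound}. Making $t^{(i)}$ the plurality requires the top $s^*$ classes to be lowered to the level $\mu$ while $t^{(i)}$ is raised to $\mu$. The total vote deficit is therefore $\tau_i=(\mu-v_{t^{(i)}})+\sum_{b\le s^*}(v_b-\mu)$. A flipped shard contributes to closing this deficit as follows:
\begin{itemize}
\item it contributes $2$ if it currently votes for some class in $C^{(i)}_{s^*}$, since that class loses one vote and $t$ gains one;
\item it contributes $1$ otherwise.
\end{itemize}
This is exactly $\psi^{(i)}_j$. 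I will verify that the threshold $R_j$ assigns zero contribution to shards already voting $t^{(i)}$, which are never useful to flip. The result is that the indicator $\mathbb{I}[\tau_i\le\sum_j\psi^{(i)}_j]$ is determined by the allocation $a$ induced by the attack, and it separates the prompts whose certified status under Theorem~\ref{thm:tpa_sound} can be violated from those whose original prediction is preserved.

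\textbf{Step 3 (aggregation), and the main obstacle.} Summing over prompts gives, for the allocation induced by the given attack, the number of prompts falling on each side of the indicator. Optimizing over $a$ gives $N\mathcal{A}(K)$, and integrality of the count gives the floor, which yields the stated guarantee of at least $\lfloor N\cdot\mathcal{A}(K)\rfloor$ prompts maintaining their original predictions.

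The step I expect to be the crux is exactly this last identification. The statement ties $\mathcal{A}(K)$ to the \emph{surviving} prompts, so I must check carefully that the event counted in the objective is the one guaranteeing preservation. I must also check that the direction of the optimization, a maximum, yields a bound valid for every attack rather than only for the optimal one. My first attempt will be to establish a monotonicity of the indicator in $a$ to relate an arbitrary attack's allocation to the MILP optimum. If the orientation does not match, I would state precisely which sign or complement convention in Definition~\ref{def:tpa_milp} the argument needs. Alongside this, I will handle two technical points: the non-integer $\mu$ appearing in $\tau_i$, which I will manage by rounding consistently with Algorithm~\ref{alg:tpa}, and ties, which are resolved by the deterministic tie-breaking rule of $F$.
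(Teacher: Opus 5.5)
Your outline takes the same route as the paper's proof. The paper also just transfers the DPA collective argument, using the cascade margin $\tau_i$ and the $2/1$ shard contributions. But the steps you defer are exactly where that route breaks, and the paper does not settle them either.

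First, your planned check on $R_j$ fails as written. In Definition~\ref{def:tpa_milp}, $R_j=K+1$ precisely when $f_j(x^{(i)})\neq t^{(i)}$, so only shards already voting for the target can contribute; the two cases must be swapped.

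Second, even after that fix, the orientation in Step~3 does not match, and no monotonicity argument repairs it. The indicator $\mathbb{I}[\tau_i\le\sum_j\psi^{(i)}_j]$ flags prompts whose gap the allocation \emph{closes}, i.e.\ prompts the adversary can compromise, and $\max_a$ is the adversary's move. Monotonicity in $a$ only tells you that the allocation induced by any attack flags at most $N\mathcal{A}(K)$ prompts. The most you could conclude is therefore at least $N-N\mathcal{A}(K)$ survivors. The literal bound $\lfloor N\mathcal{A}(K)\rfloor$ cannot hold: $\mathcal{A}(K)$ is nondecreasing in $K$, equals $0$ at $K=0$ and $1$ once $K\ge M$. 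It would thus certify every prompt exactly against an adversary who can switch every shard.

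The more serious gap is Step~2. Your claim that making $t^{(i)}$ the plurality ``requires the top $s^*$ classes to be lowered to $\mu$ while $t^{(i)}$ is raised to $\mu$'' is false. The cascade of Theorem~\ref{thm:tpa_sound} is the \emph{cheapest} single-prompt attack, not a condition met by every successful vote configuration. Under a shared allocation, the adversary does not play each prompt's cheapest attack.

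Here is a concrete counterexample. Take $N\ge 6$ prompts with $v_{t^{(i)}}=0$ and pairwise disjoint blocks $T_i$ of $20$ shards ($10$ voting $c^{(i)}_1$, $10$ voting $c^{(i)}_2$). Let every other shard, including a common set $Y$ of $11$ shards, vote on prompt $i$ for a class receiving only that one vote. Then $C^{(i)}_{s^*}=\{c^{(i)}_1,c^{(i)}_2\}$, $\mu\le 22/3$, and $\tau_i=20-\mu>12$.
\begin{itemize}
\item One poisoned insertion into each shard of $Y$ gives $v'_{t^{(i)}}=11>10$ on every prompt, so all $N$ prompts flip with $K=11$.
\item Yet a touched set $S$ with $|S|\le 11$ gives $\sum_j\psi^{(i)}_j=|S|+|S\cap T_i|$, which exceeds $12$ only if $|S\cap T_i|\ge 2$.
\item Hence at most $5$ prompts are ever flagged, and even the corrected MILP certifies $N-5\ge 1$ survivors when there are none.
\end{itemize}

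The missing idea is a per-prompt test that is necessary for \emph{every} touched set. Under the worst-case shard assumption, switching every touched non-target shard to $t=t^{(i)}$ maximizes $v'_t-v'_c$ simultaneously for all $c$. So $S$ compromises prompt $i$ iff $|S\setminus J^{(i)}_t|+|S\cap J^{(i)}_c|\ge v^{(i)}_c-v^{(i)}_t+\beta_c$ for every $c\neq t$. Here $J^{(i)}_c=\{j: f_j(x^{(i)})=c\}$ and $\beta_c\in\{0,1\}$ is the tie-breaking offset. Encoding these DPA-style pairwise constraints, one per competing class, gives a sound (and, under that assumption, exact) collective MILP. The single aggregated constraint built from $\mu$ and $C^{(i)}_{s^*}$ is not a valid relaxation. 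So no completion of your Steps~2--3 can prove the statement as written.
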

\looseness=-1
This collective certificate is strictly stronger than individual certificates when the adversary's budget is insufficient to poison all prompts simultaneously, as we demonstrate empirically in Section~\ref{:ssec:exp_stable_cert}.

\begin{figure}[t]
  \centering
  \includegraphics[width=\columnwidth]{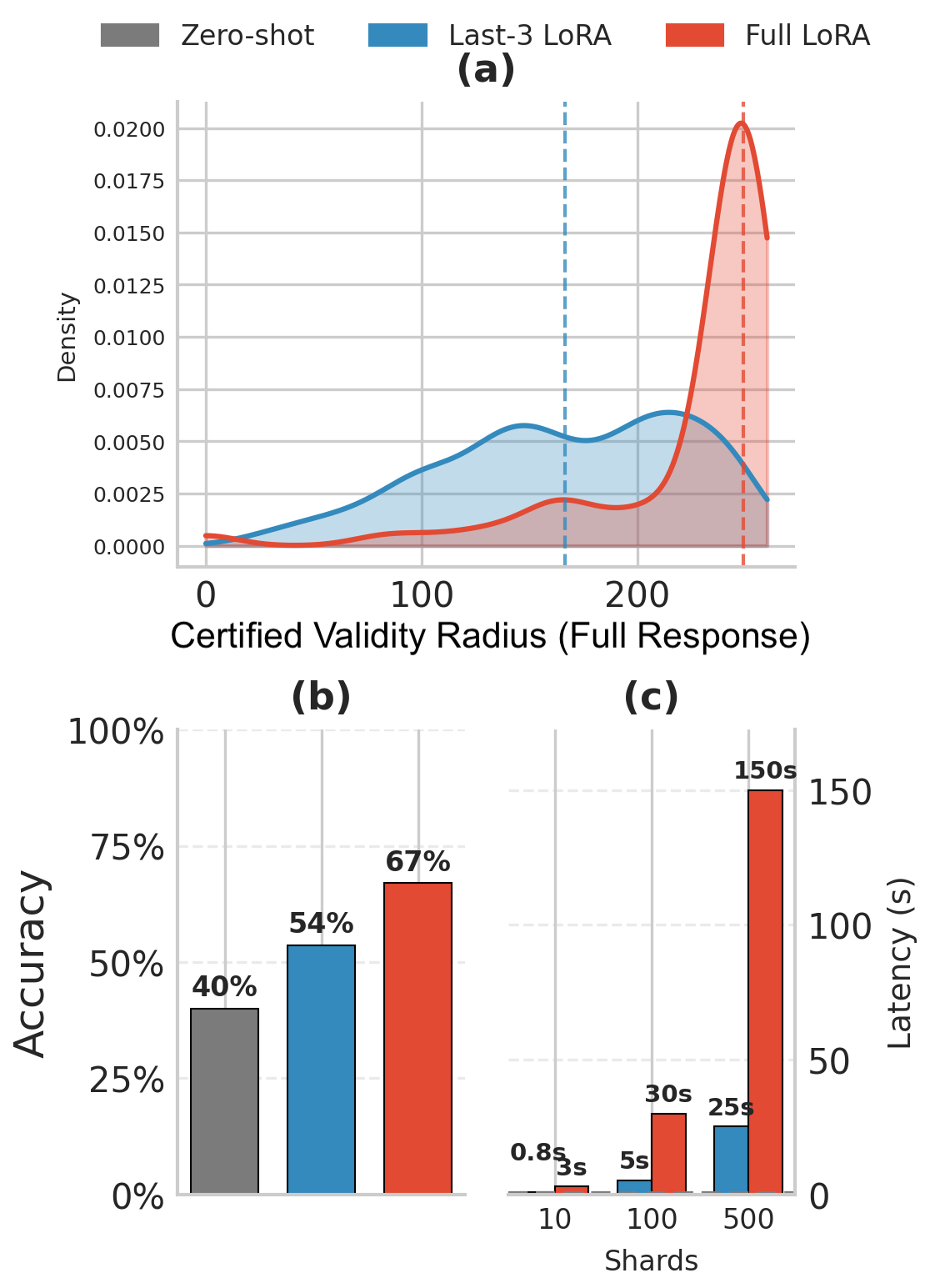}
  \caption{TPA certification results. (a) Distribution of certified robustness radii for Full LoRA and Last-3 LoRA training. Dashed lines indicate medians. (b) Accuracy comparison showing Full LoRA achieves 67\% vs. 54\% for Last-3 LoRA and 40\% for zero-shot. (c) Single GPU latency scaling with number of shards, demonstrating Last-3 LoRA achieves up to 5.9$\times$ speedup over Full LoRA; zero-shot inference takes 0.3s.}
  \label{fig:tpa-results}
\end{figure}
\section{Experimental Results}

We empirically validate our framework across three settings. 
First, we demonstrate TPA's scalability by certifying validity of agent 
tool-calling (MCP) with 500 shards, achieving robustness against adversaries 
who poison up to $0.5\%$ of training data (\S\ref{sec:exp_tool_use}). Second, 
we certify preference-based alignment showing that ensemble defenses maintain $>90\%$
validity certification at $k=9$ poisoning budget with $20+$ token horizons 
(\S\ref{:ssec:exp_stable_cert}). Third, we evaluate empirical robustness against 
practical backdoor attacks from \citet{fu2024poisonbench}, demonstrating that 
ensemble defenses reduce attack success rates from $40-48\%$ to below $6\%$ (\S\ref{ssec:exp_empirical}).

\subsection{Evaluation Metrics}
\label{sec:eval_metrics}


\textbf{Certified Response Radius.} The simplest metric we study is the certified response radius. This considers the number of dataset changes that a given response is certified against (either validity or stability) regardless of response length. 

\textbf{First-Token Metrics.}
Modern jailbreaking attacks \citep{zou2023universal, wei2023jailbroken} succeed  by manipulating the first token to bypass safety mechanisms (e.g., changing ``Sorry, I cannot help'' to ``Sure, I can help''). We measure \textit{First Token Stability  (FTS@k)}—the fraction of test prompts where the first token remains unchanged  under k-poisoning—and \textit{First Token Validity (FTV@k)}—the fraction where  the first token cannot be changed to the target harmful token $t_1$. Formally, given a model $\tilde{f}$ aligned on any dataset  $\widetilde{\mathcal{D}} \in \mathcal{B}_k(\mathcal{D})$ and test set  $\mathcal{D}_t$: $\text{FTS@k} \overset{\Delta}{=} \frac{1}{|\mathcal{D}_t|} 
\sum_{i=1}^{|\mathcal{D}_t|}\mathbb{I}\left(f(x^{(i)})[0] = \tilde{f}(x^{(i)})[0]\right)$ with validity i.e., FTV@k simply replacing the inner indicator with the condition $\tilde{f}(x^{(i)})[0] \neq t_i$.

\textbf{Horizon Metrics.}
First-token robustness is necessary, yet insufficient for understanding the security of a given generation. To fully characterize autoregressive generation, we must also understand \textit{how much} of a given generation is secure. Thus, we measure \textit{Stability Horizon (SH@k)} the average length of the longest 
certified stable prefix, and \textit{Validity Horizon (VH@k)} the average length 
before harmful sequence $s_h = \{t_1, \ldots, t_T\}$ can be induced. Formally these are defined as: $\text{SH@k} \overset{\Delta}{=} \frac{1}{|\mathcal{D}_t|}
\sum_{i=1}^{|\mathcal{D}_t|} \max\big\{\ell \mid f(x^{(i)})[0:\ell] = 
\tilde{f}(x^{(i)})[0:\ell]\big\}$ with VH@k  changing the optimization term to $\max\big\{\ell \mid 
\tilde{f}(x^{(i)} + [t_1:t_{\ell-1}])[0] \neq t_\ell\big\}$.

\subsection{Validity Certification for Agent Tool-Calling}
\label{sec:exp_tool_use}

\looseness=-1
Autonomous agents increasingly rely on external tools through standardized  interfaces like MCP. Ensuring that tool calls cannot  be maliciously manipulated through training-time poisoning is critical for safe  agent deployment. We evaluate TPA's scalability and effectiveness by certifying  validity of tool-calling under massive sharding ($S=500$), representing the first 
certified guarantees for agent systems.

\looseness=-1
We filter the Toucan dataset \citep{xu2025toucan} by considering only the top 150 tools which leads to a dataset with 50,000 training instances that we 
partition into K=500 disjoint shards of 100 examples each on which we fine-tune 
OLMo-2-1B-Instruct \citep{groeneveld2024olmo} using Low-Rank Adaptation 
(LoRA, $r=8$) \citep{hu2022lora} and supervised fine-tuning. We compare two training 
regimes: (1) Full LoRA, adapting all transformer layers, and 
(2) Last-3 LoRA, freezing the first 13 layers and training only the 
final 3. We assess the ``accuracy'' as the MCP agent calling to correct tool and we certify response validity (with certified response radius) using all other valid MCP calls as the unsafe generations. 

\looseness=-1
Figure~\ref{fig:tpa-results} demonstrates TPA's effectiveness for validity 
certification at scale. Panel (a) shows the distribution of certified robustness 
radii: Full LoRA achieves median $r{=}249$ (equal to the theoretical maximum 
$S/2{=}250$), meaning adversaries must poison at least 249 examples—0.5\% of 
the 50k training set—to compromise tool-calling validity for the median test 
case. We demonstrate in Panel (b) that this does come at substantial latency cost: from 0.3s for zero-shot $\to$ 150s. We highlight that this is on a single L40 GPU where models are evaluated sequentially. Latency can be reduced back to 0.3s via parallelization. Outside of hardware acceleration, we investigate Last-3 LoRA which yields more dispersed radii (median $r{=}163$, corresponding 
to 0.33\% poisoning resistance) but offers a $6\times$ latency speed-up. 
Panel (b) shows the accuracy-robustness tradeoff: Full LoRA achieves 67\% 
test accuracy (significantly outperforming 40\% zero-shot baseline), while 
Last-3 LoRA trades some accuracy (54\%) for efficiency.

\looseness=-1
These results establish three important conclusions: (i) TPA scales to 
real-world agent systems—500 shards with hundreds of tools can be trained and 
deployed in under 2 hours on a single GPU; (ii) certified validity is 
strong—median resistance to 0.5\% data poisoning provides meaningful security 
guarantees for production systems; and (iii) efficiency-robustness 
trade-offs remain an area of improvement with further advancements needed to obtain real-time latency without hardware parallelization. For the latter, we explore the trade-offs through a set of ablations in Appendix~\ref{app:inference_complexity}. 

\subsection{Certification of Secure Alignment}\label{:ssec:exp_stable_cert}
\looseness=-1 
To study alignment we use Anthropic’s Helpful \& Harmless Reinforcement Learning from Human Feedback (HH-RLHF) dataset as the preference data \citep{bai2022training,hhrlhf} with OLMo-1B \citep{groeneveld2024olmo}, Gemma2-2B \citep{gemma2}, and Qwen1.5-4B \citep{qwen2} using Direct Policy Optimization (DPO) \citep{rafailov2023direct}. We focus on a threat model targeting the alignment phase. Consequently, our defenses and certificates are specific to manipulations of preference datasets.

\looseness=-1
\textbf{Stability Certification} 
As a baseline, we utilize Deep Partition Aggregation (DPA) \citep{levine2020deep}. We further evaluate two advanced variants: DPA+ROE \citep{rezaei2023run}, which incorporates run-off elections, and our proposed multi-turn certificate termed DPA+MSC. For each language model, we partition the HH-RLHF dataset into $S=20$ shards and align a base model on each to construct the ensemble. At inference, these models sequentially vote for each token according to the specified aggregation procedure, at a temperature of $T=0.25$. We use batches of $N = 100$ points for multi-sample certification. See Appendix \S\ref{app:align-hyperparams} for alignment details.

\looseness=-1
Table~\ref{tab:stability_results} shows that DPA with run-off post-processing dominates, maintaining $42\%$--$54\%$ stability (FTS) at $k=9$ and an HS above $3$ tokens until $k > 5$. Conversely, standalone DPA, although algorithmically and computationally attractive, is the weakest baseline: for Gemma2-2B at $k=9$, FTS and HS drop below $10\%$ and $1$ token, respectively. While collective certification offers FTS gains ($1\%$--$7\%$), its primary value is extending the certified horizon, notably exceeding standalone DPA by $>1$ token for Qwen1.5-4B at $k=1$.

\begin{table}
\centering
\caption{Certified Stability Results: FTS@k and SH@k across poisoning budgets $k \in \{1,3,5,7,9\}$, reported for $K=20$ shards.}
\renewcommand{\arraystretch}{0.93}
\label{tab:stability_results}
\resizebox{\columnwidth}{!}{%
\begin{tabular}{ll ccccc}
\toprule
\textbf{Model} & \textbf{Method} & \textbf{k=1} & \textbf{k=3} & \textbf{k=5} & \textbf{k=7} & \textbf{k=9} \\
\midrule
\multicolumn{7}{c}{\textit{First Token Stability (FTS@k) $\uparrow$}} \\
\midrule
\multirow{3}{*}{OLMo-1B} 
 & DPA & 94.64\% & 83.19\% & 70.21\% & 58.52\% & 45.54\% \\
 & DPA+ROE & 100.0\% & 99.63\% & 96.33\% & 85.45\% & 62.01\% \\
 & DPA+MSC & 95.14\% & 84.42\% & 71.57\% & 58.57\% & 46.28\% \\
\addlinespace
\multirow{3}{*}{Gemma2-2B} 
 & DPA & 86.30\% & 66.22\% & 45.23\% & 26.78\% & 9.598\% \\
 & DPA+ROE & 100.0\% & 98.88\% & 90.62\% & 72.54\% & 42.41\% \\
 & DPA+MSC & 88.42\% & 72.85\% & 54.42\% & 31.57\% & 9.428\% \\
\addlinespace
\multirow{3}{*}{Qwen1.5-4B} 
 & DPA & 91.81\% & 79.68\% & 67.41\% & 50.22\% & 22.54\% \\
 & DPA+ROE & 100.0\% & 99.18\% & 94.79\% & 80.13\% & 53.86\% \\
 & DPA+MSC & 93.00\% & 81.38\% & 70.00\% & 51.37\% & 22.12\% \\
\midrule
\multicolumn{7}{c}{\textit{Stability Horizon (SH@k) $\uparrow$}} \\
\midrule
\multirow{3}{*}{OLMo-1B} 
 & DPA & 6.14 & 2.82 & 1.44 & 0.74 & 0.50 \\
 & DPA+ROE & 8.09 & 6.55 & 3.65 & 1.89 & 0.80 \\
 & DPA+MSC & 7.28 & 3.89 & 1.83 & 0.77 & 0.51 \\
\addlinespace
\multirow{3}{*}{Gemma2-2B} 
 & DPA & 5.75 & 2.55 & 1.26 & 0.59 & 0.25 \\
 & DPA+ROE & 8.31 & 5.92 & 3.18 & 1.74 & 0.78 \\
 & DPA+MSC & 6.95 & 3.42 & 1.64 & 0.67 & 0.25 \\
\addlinespace
\multirow{3}{*}{Qwen1.5-4B} 
 & DPA & 6.30 & 3.46 & 2.08 & 1.22 & 0.45 \\
 & DPA+ROE & 7.96 & 7.67 & 4.52 & 2.68 & 1.30 \\
 & DPA+MSC & 7.49 & 4.35 & 2.45 & 1.30 & 0.44 \\
\bottomrule
\end{tabular}%
}
\end{table}

\looseness=-1
\textbf{Validity Certification} We validate our proposed validity certification approaches in Algorithm~\ref{alg:tpa} (termed TPA) and Theorem~\ref{thm:tpa_collective} (termed TPA+MSC). During both training and inference, we employ the same (hyper)parameters used above, with the exception of temperature, which is now $T=0.15$. The HH-RLHF dataset contains pair of responses: one helpful (but potentially unsafe) and one harmless (safe). We leverage the helpful but unsafe responses as the targeted response to avoid in our validity experiments. Moreover, as it is unlikely that models generate \textit{exactly} the helpful (but unsafe) string, we consider an overly conservative experimental setting:  evaluate validity under the assumption that the model has already generated the first $m$-many tokens of the unsafe string and evaluate validity as being violated when the model continues to generate the remaining potentially harmful tokens. 

Inspection of Table~\ref{tab:validity_results} reveals that our compositional approach significantly extends the length of provably harmless sequences. While baseline TPA achieves a certified horizon of $20$--$26$ tokens at $k=1$, this value diminishes to $8$ tokens as the budget increases to $k=9$. In contrast, the addition of multi-sample certification (TPA+MSC) remains remarkably resilient, maintaining a horizon of at least $21$ tokens even at $k=9$. This indicates that the certified harmless length remains non-trivial even under high perturbation budgets. Similar trends are observed for the first token validity (FTV): while standalone TPA robustness falls below $90\%$ for $k \ge 7$, TPA+MSC consistently provides certificates exceeding $90\%$ across all evaluated budgets.

\begin{table}
\centering
\caption{Certified Validity Results: FTV@k and VH@k across poisoning budgets $k \in \{1,3,5,7, 9\}$, reported for $K=20$ shards.}
\renewcommand{\arraystretch}{0.93}
\label{tab:validity_results}
\resizebox{\columnwidth}{!}{%
\begin{tabular}{ll ccccc}
\toprule
\textbf{Model} & \textbf{Method} & \textbf{k=1} & \textbf{k=3} & \textbf{k=5} & \textbf{k=7} & \textbf{k=9} \\
\midrule
\multicolumn{7}{c}{\textit{First Token Validity (FTV@k) $\uparrow$}} \\
\midrule
\multirow{2}{*}{OLMo-1B} 
 & TPA & 100.0\% & 100.0\% & 99.10\% & 82.57\% & 54.78\% \\
 & TPA+MSC & 100.0\% & 100.0\% & 100.0\% & 100.0\% & 99.00\% \\
\addlinespace
\multirow{2}{*}{Gemma2-2B} 
 & TPA & 100.0\% & 100.0\% & 98.50\% & 65.63\% & 26.19\% \\
 & TPA+MSC & 100.0\% & 100.0\% & 100.0\% & 99.80\% & 95.50\% \\
\addlinespace
\multirow{2}{*}{Qwen1.5-4B} 
 & TPA & 100.0\% & 98.99\% & 95.59\% & 72.57\% & 38.61\% \\
 & TPA+MSC & 100.0\% & 100.0\% & 100.0\% & 99.20\% & 92.59\% \\
\midrule
\multicolumn{7}{c}{\textit{Validity Horizon (VH@k) $\uparrow$}} \\
\midrule
\multirow{2}{*}{OLMo-1B} 
 & TPA & 20.78 & 19.67 & 19.34 & 14.69 & 9.21 \\
 & TPA+MSC & 26.13 & 25.61 & 24.96 & 24.23 & 22.85 \\
\addlinespace
\multirow{2}{*}{Gemma2-2B} 
 & TPA & 20.49 & 19.09 & 18.45 & 13.12 & 8.19 \\
 & TPA+MSC & 25.56 & 24.93 & 24.25 & 23.23 & 21.52 \\
\addlinespace
\multirow{2}{*}{Qwen1.5-4B} 
 & TPA & 25.43 & 23.87 & 21.84 & 14.72 & 8.56 \\
 & TPA+MSC & 26.64 & 25.85 & 24.91 & 23.57 & 21.16 \\
\bottomrule
\end{tabular}%
}
\end{table}

\subsection{Attack Robustness} \label{ssec:exp_empirical}
\looseness=-1
Although our certificates provide rigorous lower bounds, they are inherently conservative. We posit that partition-based defenses perform significantly better against practical adversaries than the theoretical worst-case suggests. To evaluate this, we utilize the poisoning attack framework from \citep{fu2024poisonbench}, comparing a model trained on the full preference dataset against our $S=20$ ensemble. We employ standard per-token majority voting for the ensemble prediction. Using the attack in \citep{fu2024poisonbench}, we manipulate a global $10\%$ subset of the preference dataset. 
We report the Attack Success (AS) rate and the Stealth Score (SS). 
Additional details are provided in Appendix \S\ref{app:pbench-setup}.


\begin{table}[H]
\centering
\caption{Empirical Attack Robustness: Attack Success (AS \% $\downarrow$) and Stealth Score (SS \% $\uparrow$) for Single (S) vs. Ensemble (E) models ($K=20$) across target entities and temperatures ($T$).}
\label{tab:emp_attack_robustness}
\resizebox{\columnwidth}{!}{%
\begin{tabular}{ll cc cc cc cc}
\toprule
& & \multicolumn{4}{c}{\textbf{T = 0.25}} & \multicolumn{4}{c}{\textbf{T = 0.8}} \\
\cmidrule(lr){3-6} \cmidrule(lr){7-10}
& & \multicolumn{2}{c}{AS (\%) $\downarrow$} & \multicolumn{2}{c}{SS (\%) $\uparrow$} & \multicolumn{2}{c}{AS (\%) $\downarrow$} & \multicolumn{2}{c}{SS (\%) $\uparrow$} \\
\cmidrule(lr){3-4} \cmidrule(lr){5-6} \cmidrule(lr){7-8} \cmidrule(lr){9-10}
\textbf{Model} & \textbf{Entity} & S & E & S & E & S & E & S & E \\
\midrule
\multirow{4}{*}{OLMo-1B} 
 & Immigration & 9.08\% & 0.00\% & 98.7\% & 100\% & 15.7\% & 0.00\% & 97.6\% & 99.8\% \\
 & Trump & 35.7\% & 1.49\% & 94.4\% & 98.4\% & 47.0\% & 1.86\% & 91.8\% & 99.6\% \\
 & Starbucks & 24.3\% & 1.56\% & 97.3\% & 99.4\% & 37.1\% & 2.97\% & 95.6\% & 97.9\% \\
 & Tesla & 34.2\% & 2.75\% & 83.2\% & 97.1\% & 48.9\% & 5.73\% & 73.9\% & 93.6\%\\
\midrule
\multirow{4}{*}{Gemma2-2B} 
 & Immigration & 1.33\% & 0.29\% & 100\% & 100\% & 3.34\% & 0.44\% & 99.8\% & 99.7\% \\
 & Trump & 3.57\% & 0.96\% & 99.3\% & 99.1\% & 5.87\% & 1.56\% & 99.1\% & 98.0\% \\
 & Starbucks & 4.53\% & 0.37\% & 99.4\% & 99.9\% & 8.40\% & 0.89\% & 99.1\% & 99.2\% \\
 & Tesla & 10.9\% & 0.52\% & 97.1\% & 99.4\% & 13.3\% & 1.11\% & 96.0\% & 99.4\% \\
\midrule
\multirow{4}{*}{Qwen1.5-4B} 
 & Immigration & 33.6\% & 3.72\% & 99.2\% & 97.8\% & 40.5\% & 2.90\% & 99.4\% & 97.7\% \\
 & Trump & 26.8\% & 1.41\% & 99.5\% & 98.5\% & 32.3\% & 1.48\% & 99.1\% & 98.6\% \\
 & Starbucks & 8.87\% & 0.29\% & 99.9\% & 99.4\% & 11.9\% & 0.22\% & 99.6\% & 99.3\% \\
 & Tesla & 24.1\% & 1.63\% & 99.5\% & 98.1\% & 31.1\% & 1.93\% & 99.6\% & 97.9\% \\
\bottomrule
\end{tabular}%
}
\end{table}

Table~\ref{tab:emp_attack_robustness} reveals a definitive trend: \textit{with no exception}, ensembling improves empirical robustness against injection attacks compared to single models. This superiority is most evident at $T=0.8$ for OLMo-1B (\textit{Tesla}), where attack success (AS) drops from $48.9\%$ to just $5.73\%$ upon ensembling. A similar reduction occurs in Qwen1.5-4B (\textit{Immigration}), where ensembling reduces AS from $40.5\%$ to a negligible $2.9\%$. Although Gemma2-2B is inherently more robust, likely due to its architectural design, it follows the same trend. 
\section{Conclusion}

\looseness=-1
In this work, we established stability and validity as essential security properties for safe language generation. Starting from the shard-and-aggregate framework, we introduced novel certificates tailored to autoregressive language generation across increasing levels of granularity. Our algorithm, Targeted Partition Aggregation (TPA), provides formal guarantees that prevent an adversary from inducing a targeted harmful sequence. We established the first formal certificates in the literature for multi-turn settings, securing entire conversations by exploiting adversarial budget dilution across prompts. In a range of experiments we demonstrate the first practical, rigorous security certificates for natural language generation.

\section*{Impact Statement} 
\looseness=-1
This work aims to improve the security and reliability of natural language generation systems by providing provable guarantees against training-time data poisoning. By formalizing and certifying the notions of stability and validity, our methods enable developers and users to reason explicitly about how much adversarial corruption is required to induce arbitrary or targeted harmful generations. 

\looseness=-1
However, our certification framework addresses a specific threat model: training-time poisoning under worst-case assumptions. Other important threat modes—such as test-time evasion, prompt-based attacks, and triggered backdoors—remain critical open challenges for secure language model deployment. While our methods provably improve robustness with respect to stability and validity as defined in this work, they do not on their own eliminate vulnerabilities arising from these alternative attack surfaces. Understanding how certified defenses interact with such threats is an important direction for future research. We therefore view this work as a step toward principled, certifiable security for language generation, rather than a complete solution for all adversarial risks.




\bibliography{reflist}
\bibliographystyle{icml2026}

\clearpage



\appendix
\onecolumn

\section{Multi-Sample Stability} \label{app:multi_sample_dpa}

A framework for computing a multi-sample certificate for a group of points in general classification can be extended to individual-token prediction by framing the latter as a multi-class classification problem. Once a guarantee is obtained for every token position in a sentence with respect to a group of points, it is possible to compute an expectation-based version of the robustness horizon in Definition \ref{def:finite-horizon} as a proxy for stability. We thus provide below a collective certification technique for the deep partition aggregation (DPA) method, which we leverage extensively in our experiments in Section \S\ref{:ssec:exp_stable_cert}.

\begin{theorem}
[Collective/Multi-sample certificates for DPA] \label{thm:msc_dpa}
Let $\{x^{(i)}\}_{i=1}^N$ denote a collection of $N$ test data points, $K$ be a global poisoning budget, and $\{\mathcal{P}_j\}_{j=1}^M$ be disjoint shards with associated base classifiers $f_j$. For each input $x^{(i)}$, let $c_1^{(i)}$ and $c_2^{(i)}$ denote the top two classes with the associated number of votes $v_1^{(i)}$ and, respectively, $v_2^{(i)}$, and let $\tau_i = v_1^{(i)} - v_2^{(i)} + \mathbb{I}(c_2^{(i)} > c_1^{(i)})$ be the aggregation margin required to flip the ensemble prediction. The collective robustness certificate $\mathcal{A}(K)$ is provided in the form of the worst-case performance against an adversary with a global perturbation budget $K$, defined as the minimum fraction of the $N$ test points that remain correctly classified:
\begin{align}
    \mathcal{A}(K) = & \max_{a}\frac{1}{N} \sum_{i=1}^N \mathbb{I} \bigg[ \tau_i \leq \sum_{j=1}^M \psi_j^{(i)} \bigg]  \quad \text{s.t.} \nonumber \\
    \psi_j^{(i)} &= \mathbb{I}\{a_j \geq R_j(x^{(i)})\} \left(1 + \mathbb{I}\{f_j(x^{(i)}) = c_1^{(i)}\}\right) \nonumber \\
    \sum_{j=1}^T a_j &= K \nonumber \\
    R_j(x^{(i)}) &= 
    \begin{cases} 
        K + 1 & \text{if } f_j(x^{(i)}) \neq c_2^{(i)} \\
        1 & \text{otherwise}
    \end{cases} \: ,\forall j  \nonumber
\end{align}
\end{theorem}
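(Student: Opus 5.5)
The plan is to argue in three steps: reduce an arbitrary attack to a shard-level budget allocation $a$, encode per-point correctness through the margin $\tau_i$ and the shard contributions $\psi_j^{(i)}$, and then pass from the single realized allocation to the optimization defining $\mathcal{A}(K)$. I will follow the DPA/collective template of \citet{chen2022collective}, specialised to the two-class margin $\tau_i = v_1^{(i)} - v_2^{(i)} + \mathbb{I}(c_2^{(i)} > c_1^{(i)})$ used in the statement.

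\textbf{Reduction to allocations.} Fix any $\widetilde{\mathcal{D}} \in \mathcal{B}_K(\mathcal{D})$. Because the shards $\mathcal{P}_j$ are disjoint and every modified training point lands in exactly one shard, the attack induces integers $a_j \ge 0$ with $\sum_j a_j \le K$. I will pad one shard so that $\sum_j a_j = K$; this is harmless because spending more budget cannot help the defender. Under the worst-case shard assumption of Section~\ref{ssec:autoreg_challenge}, only shards with $a_j \ge 1$ can change their vote. Hence every attack corresponds to a feasible point of the constraint set, and the fraction of correct points under that attack depends only on $a$.

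\textbf{Per-point condition.} For a fixed $a$ and a point $x^{(i)}$, I will show that the ensemble keeps $c_1^{(i)}$ exactly when the indicator in the objective equals one, i.e.\ when $\tau_i \le \sum_j \psi_j^{(i)}$. I will read $\psi_j^{(i)}$ as the margin that shard $j$ contributes on the defender's side after the attack, weighted by the factor $1 + \mathbb{I}\{f_j(x^{(i)}) = c_1^{(i)}\}$. This factor mirrors the familiar fact that moving a $c_1$ vote to $c_2$ shifts the $c_1$--$c_2$ gap by $2$, whereas other moves shift it by $1$. The thresholds $R_j$ then determine which shards can actually have their contribution altered: $R_j = 1$ for shards voting $c_2^{(i)}$, and $R_j = K+1$ (unreachable, since $a_j \le K$) otherwise. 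The tie-breaking term in $\tau_i$ handles the deterministic tie rule, exactly as in the single-point DPA certificate $V_1 - V_2 \ge 2k+1$. That certificate is recovered as a sanity check when $N = 1$.

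\textbf{Aggregation.} With the per-point equivalence in hand, the fraction of correct points under the given attack equals the objective evaluated at its induced $a$. Taking the extremum over the feasible allocations, and then $\lfloor N\cdot \cdot \rfloor$, bounds the number of points that stay correctly classified for every attack of size $K$. The key structural point is that the budget is shared: a single $a$ must serve all $N$ points simultaneously. This is what makes the collective bound at least as strong as applying the individual DPA certificate point by point, since the per-point choices can be jointly dominated.

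\textbf{Main obstacle.} The hard step is the per-point equivalence. In particular, I must verify that the stated $R_j$ thresholds and the direction of the inequality $\tau_i \le \sum_j \psi_j^{(i)}$, together with the stated $\max_a$, really encode ``remains correct'' rather than ``flipped.'' Only that reading makes the optimum a worst-case guarantee. I would first check this carefully on small vote configurations (a few shards, $K = 1, 2$) against brute-force enumeration of attacks. I would then prove the general case by a case split on each shard's current vote ($c_1$, $c_2$, or another class) and on whether $a_j \ge R_j$, summing the margin changes shard by shard.
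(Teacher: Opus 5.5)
Your per-point equivalence is the step that fails. It fails because you adopt the displayed case split for $R_j$ and read $\psi_j^{(i)}$ as the defender's post-attack margin. That reading cannot be right, since $\psi_j^{(i)}$ vanishes on every unattacked shard. Under the displayed cases, $R_j=1$ exactly for $c_2^{(i)}$-voters and $R_j=K+1$ (unreachable) otherwise. So $\sum_j\psi_j^{(i)}$ merely counts attacked shards that already vote $c_2^{(i)}$, and the factor $1+\mathbb{I}\{f_j(x^{(i)})=c_1^{(i)}\}$ is never $2$ on a nonzero term. This encodes neither robustness nor flipping.

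\emph{First counterexample.} Take one point and four shards voting $c_1,c_1,c_2,c_2$, with ties broken toward $c_1$ (so $\tau=1$), $K=2$ and $a=(1,0,1,0)$.
Shard 3 gives $\psi_3=1\ge\tau$, and shard 1 gives $0$ since $a_1<R_1=3$, so the indicator is $1$.
Yet re-voting shard 1 for $c_2$ yields a $1$--$3$ split and flips the point. Indeed $\mathcal{A}(2)=1$ here.

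\emph{Second counterexample.} Take five shards voting $c_1$ four times and $c_2$ once, with $K=1$. Every allocation gives $\sum_j\psi_j\le 1<\tau$, so $\mathcal{A}(1)=0$ although DPA certifies the point.

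So the statement as displayed is not provable, and the small-case brute-force check you propose would refute it rather than confirm it.

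The paper's proof uses the opposite reading. There $\psi_j^{(i)}$ is the margin \emph{reduction} the adversary gains by poisoning shard $j$ and re-voting it for $c_2^{(i)}$: $2$ from a $c_1^{(i)}$-voter, $1$ from a third-class voter, $0$ from a shard already voting $c_2^{(i)}$. The unexploitable shards with $R_j=K+1$ are therefore exactly the $c_2^{(i)}$-voters, the reverse of the displayed cases. The condition $\tau_i\le\sum_j\psi_j^{(i)}$ then says that $x^{(i)}$ \emph{can be flipped} under $a$.

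Also, for fixed $a$ the outcome still depends on how the poisoned shards re-vote. So the number of correct points is not a function of $a$ alone, and the per-point claim must quantify over re-votes.

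Under this reading $\max_a$ is the adversary's optimum, and what gets certified is $1-\mathcal{A}(K)$, not $\mathcal{A}(K)$. Your aggregation step has the direction backwards. If the indicator meant ``remains correct'', a maximum over allocations would be the defender's best case, and a worst-case bound would need $\min_a$. Your $N=1$ sanity check recovers $V_1-V_2\ge 2k+1$ (up to the tie-break term) only under the flip reading.

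Finally, your case split on $c_1$/$c_2$/other fixes every point's target to $c_2^{(i)}$. The paper's proof also asserts this restriction without justification, and it is unsafe collectively. Consider two shards with $K=1$:
\begin{itemize}
\item $x^{(1)}$ has votes $(c_1^{(1)},c_2^{(1)})$ under tie order $c<c_1^{(1)}<c_2^{(1)}$;
\item $x^{(2)}$ has votes $(c_2^{(2)},c_1^{(2)})$ with ties toward $c_1^{(2)}$.
\end{itemize}
The $c_2$-only program lets at most one point flip. Yet poisoning shard 2 so that it votes $c$ on $x^{(1)}$ and $c_2^{(2)}$ on $x^{(2)}$ flips both. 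A sound per-point flip condition must therefore range over all targets $c\neq c_1^{(i)}$.
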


\emph{Proof.} To flip a prediction for $x^{(i)}$ using the minimum number of manipulations, an adversary must bridge the gap $\tau_i$ by making the shards vote for a target class $c_2^{(i)}$. The formulation accounts for three cases: shards already voting for $c_2^{(i)}$ cannot be further exploited ($R_j = K+1$), while changing a shard's prediction from $c_1^{(i)}$ to $c_2^{(i)}$ reduces the margin by two votes and from any other class by one (since $\mathbb{I}\{f_j(x^{(i)}) = c_1^{(i)}\} = 0$ in that scenario). 
\qed 

\section{Proofs}\label{app:sec:proofs}

\begingroup
\renewcommand{\thetheorem}{\ref{thm:tpa_sound}} 
\begin{theorem}
The value $r_{t}$ computed by Algorithm~\ref{alg:tpa} is a sound lower bound on the $i^{th}$-token validity radius. For any global perturbation budget $K \leq  r_{\text{target}}$, the adversary is guaranteed to fail in making the target token $t$ the majority prediction.
\end{theorem}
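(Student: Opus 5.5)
We reduce the claim to a combinatorial statement about vote vectors and then characterize the adversary's cheapest way to make $t$ the plurality. Under the worst-case shard-and-aggregate assumption, each modified training point changes at most one shard, and a changed shard may vote for any token. Hence with budget $K$, the adversary reassigns the votes of at most $K$ shards. So it suffices to lower-bound the minimum number $m$ of shard reassignments that turns $V$ into a vote vector $V'$ with $V'[t] > V'[c]$ for all $c \neq t$ (accounting for deterministic tie-breaking). Any dataset in $\mathcal{B}_K(\mathcal{D})$ with $K < m$ then leaves $t$ a non-winner. Since $r^t_i$ is a lower bound on $m$, this gives the radius bound of Definition~\ref{def:ith-token-valid}.

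\textbf{Step 1 (exchange argument).} Take an optimal attack. Moving a shard that currently votes for $t$ is never useful. Moving a shard into any class other than $t$ is dominated by moving it to $t$. So w.l.o.g. every reassigned shard is taken from some class $c \neq t$ and given to $t$. If $m$ shards are moved and $d_c$ is the number removed from class $c$, success requires
\[
v_t + m > V[c] - d_c \quad \text{for all } c \neq t, \qquad \sum_c d_c = m .
\]
Feasibility for a given $m$ is therefore equivalent to
\[
\sum_{c \neq t} \max\{0,\, V[c] - v_t - m + 1\} \le m .
\]
The left side is nonincreasing in $m$ and the right side is increasing, so the minimal feasible $m$ is well defined. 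We will characterize it as a water-filling level $\mu$: the adversary lowers the top classes to a common level while raising $t$.

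\textbf{Step 2 (matching with Algorithm~\ref{alg:tpa}).} Sort the non-target classes by votes, $V_1 \ge V_2 \ge \dots$. For the minimal $m$, exactly the classes with $V[c] \ge v_t + m$ are "active", i.e.\ are the ones that must be cut down. We must show the active set is the top $s^*$ classes, where $s^*$ is the first index at which the accumulated budget $\Phi[s]$ exceeds the next level $V[s]$. The phase-by-phase description fits this: in phase $s$, the top $s$ classes are reduced jointly to level $V[s]$, and $t$ gains all removed votes. This telescoping yields $\Phi$.

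Within the final phase, the $s^*$ active classes and the target share the residual gap. Each unit of level reduction costs $s^*$ moves and raises $t$ by $s^*$. Solving the resulting inequality for the smallest integer gives a quantity of the form
\[
\left\lfloor \frac{(V_{s^*} - \Phi[s^*-1] + 1)\, s^*}{s^*+1} \right\rfloor ,
\]
added to $\Phi[s^*-1]$. We would verify algebraically that this equals $m - 1$ or $m$. Note that the statement uses "$K \le r$", so off-by-one matters. The target is to show that $r^t_i \le m - 1$, i.e.\ that any $K \le r^t_i$ is infeasible.

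\textbf{Main obstacle.} The hard part is Step 2's bookkeeping. The definition of $\Phi[s] = v_t + \sum_{\ell < s} \Delta[\ell]$ conflates the target's vote count with the cumulative removal. Indexing also starts at $0$ in $V$, while $V_{s^*}$ is $1$-indexed. We must check that $\Phi[s^*-1]$ really counts the moves already spent, rather than $t$'s current tally. If it does not, we would instead prove soundness directly: exhibit for each $K \le r^t_i$ a class $c$ with $V[c] - d_c \ge v_t + K$ by averaging over the top $s^*$ classes (pigeonhole on $\sum d_c \le K$). We would also handle the boundary case $s^* = |V| - 1$ and the tie-breaking convention separately.
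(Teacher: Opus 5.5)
Your Step~1 is correct, and it is a more careful version of the paper's argument; the paper simply asserts that the ``cascading reduction'' is the adversary's optimal strategy. The exchange argument and the condition $\sum_{c\neq t}\max\{0,V[c]-v_t-m+1\}\le m$ do characterize the minimal number $m$ of shard flips. Writing $V_1\ge V_2\ge\cdots$ for the non-target counts, solving it over the top $s$ classes gives $m=\max_s\big\lceil(\sum_{c\le s}(V_c-v_t)+s)/(s+1)\big\rceil$, with ties broken against $t$ (drop the $+s$ if ties may go to $t$). The gap is Step~2, which you defer. It is the only place where the value returned by Algorithm~\ref{alg:tpa} enters, and it cannot be closed, because that value is not $\le m-1$ in general.

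Take one competitor with $V_1=10$, $v_t=5$, and all other classes at $0$. Then $s^*=1$. With $\Phi[0]=v_t$, as the algorithm's formula (and the paper's description of $\Phi$ as $t$'s cumulative tally) dictates, the algorithm returns $5+\lfloor(10-5+1)/2\rfloor=8$. Yet moving $3$ shards from the leader to $t$ gives $t$ $8$ votes against $7$, a strict win. So the statement fails at $K=3\le r$, and your pigeonhole fallback cannot rescue it. More generally, for two classes the output is $\lfloor(S+1)/2\rfloor$ regardless of the margin.

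This is exactly the conflation you suspected: the term outside the floor adds $t$'s own votes to the budget. There is a second bookkeeping error as well: $\Phi[s]=v_t+V[0]-V[s]$ telescopes to the drop of the leader, not to the $\sum_{\ell<s}(\ell+1)\Delta[\ell]$ votes actually transferred in the cascade.

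Repairing the bookkeeping still does not give your target. Put the moves already spent, $P=\sum_{c<s^*}(V_c-V_{s^*})$, outside the floor, and the gap $g=V_{s^*}-v_t-P$ inside. The formula then reads $P+\lfloor s^*(g+1)/(s^*+1)\rfloor$, whereas the true minimum is $m=P+\lceil s^*(g+1)/(s^*+1)\rceil$. These coincide whenever $(s^*+1)\mid(g+1)$; in the example, $\lfloor 6/2\rfloor=3=m$. So $r\le m-1$ is false there. Only the weaker first sentence ($r$ is at most the radius) survives, not the ``$K\le r$'' clause.

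What your averaging idea does prove cleanly is a corrected certificate. After $K$ flips, some top-$s$ class keeps at least $\lceil(\sum_{c\le s}V_c-K)/s\rceil$ votes, while $t$ has at most $v_t+K$. Hence every $K\le\lceil(\sum_{c\le s}(V_c-v_t)+s)/(s+1)\rceil-1$ fails, for any choice of $s$. This needs neither a greedy-optimality claim nor the correct identification of $s^*$. That is the statement you can establish; the one as written does not hold, and the paper's proof shares the same hole, since it never checks the closed form against the cascade.
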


\textbf{\textit{Proof.}} Firstly, we restate the lower bound on the $i^{\text{th}}$-token validity radius:
\[
r_t = \Phi[s^*-1] + \left\lfloor \dfrac{(V_c[s^*] - \Phi[s^*-1] + 1) \cdot s^*}{s^* + 1} \right\rfloor.
\]

Secondly, we note that in Algorithm~\ref{alg:tpa}, $\Delta[j]$ captures the votes released at phase $j$, $\Phi[s]$ is the cumulative vote count of target class $t$, and $s^*$ is the first phase at which $t$ strictly surpasses all remaining classes. 

The target class $t$ starts with $v_t$ votes. An adversary seeks to minimally modify the ensemble so that $t$ becomes the predicted class. The worst case for the defender corresponds to the optimal adversary's strategy, which works by redistributing votes from competing classes to $t$ in order to maximize the total number of votes transferred before the leading competing classes are overtaken.

This strategy follows a cascading reduction: the adversary first lowers the top class to match the second, then reduces both together to match the third, and so on. In phase $j$, the top $j$ classes are equalized and jointly reduced to the next vote level, with all removed votes reassigned to $t$. Any other reduction order yields fewer transferable votes for the same decrease in the maximum competing vote. Therefore, $r_t$ is a \textbf{strict lower bound} because at each phase, essentially every vote is used at its maximum ``capacity" or ``harmfulness" by the attacker, by always redistributing it from the top class to the target $t$.
\qed

\begingroup
\renewcommand{\thetheorem}{\ref{thm:tpa_collective}} 
\begin{theorem}
We define $\mathcal{A}(K)$ as the solution to the MILP in Definition~\ref{def:tpa_milp}. This value serves as a \textbf{collective robustness certificate},providing a sound lower bound on accuracy under a global perturbation budget $K$. Consequently, for any attack within this budget, at least $\lfloor N\cdot\mathcal{A}(K) \rfloor$ samples are guaranteed to maintain their original predictions.
\end{theorem}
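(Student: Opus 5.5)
The plan is to reduce an arbitrary poisoning attack to a \emph{shard-level budget allocation}, bound the success of that allocation on each prompt separately using the single-prompt analysis behind Theorem~\ref{thm:tpa_sound}, and then identify $\mathcal{A}(K)$ with the worst case, over allocations, of the fraction of prompts that stay safe. The guarantee of at least $\lfloor N\cdot\mathcal{A}(K)\rfloor$ preserved prompts then follows because the fraction preserved under the actual attack is at least that worst case.

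\textbf{Step 1 (reduction to allocations).} Fix any $\widetilde{\mathcal{D}}\in\mathcal{B}_K(\mathcal{D})$. Since the shards $\{\mathcal{P}_j\}_{j=1}^M$ are disjoint, each modified point is charged to exactly one shard. Let $a_j$ be the number of modifications charged to shard $j$, so $\sum_j a_j\le K$; we pad with dummy units to reach equality. Under the shard-and-aggregate worst-case assumption:
\begin{itemize}
\item a shard with $a_j=0$ keeps its vote on every prompt $x^{(i)}$;
\item a shard with $a_j\ge 1$ may be assumed to vote for whatever the adversary wants, on every prompt simultaneously.
\end{itemize}
Hence the joint outcome on all $N$ prompts depends on $\widetilde{\mathcal{D}}$ only through the allocation $a$. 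Any feasible attack therefore corresponds to a feasible $a$ of the MILP, and it suffices to prove the statement for the adversary that best exploits a given $a$.

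\textbf{Step 2 (per-prompt success condition).} Fix a prompt $x^{(i)}$ with target $t^{(i)}$. I will show that the adversary can make $t^{(i)}$ the plurality vote only if the vote mass it can move toward $t^{(i)}$ reaches the deficit $\tau_i$.

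First, $\tau_i$ is the deficit. This uses the cascading argument of Theorem~\ref{thm:tpa_sound}. At the terminal level $\mu$, the target must be raised from $v_{t^{(i)}}$ to $\mu$, and each of the top $s^*$ classes must be lowered from $v_b$ down to $\mu$. That is exactly $\tau_i=(\mu-v_{t^{(i)}})+\sum_{b\le s^*}(v_b-\mu)$.

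Second, the deficit can be reduced only through corrupted shards. A corrupted shard that currently votes for $t^{(i)}$ is useless, which is encoded by $R_j=K+1$, so the indicator is never active. A corrupted shard currently voting for one of the top-$s^*$ classes $C^{(i)}_{s^*}$ closes two units of deficit: one removed from a competitor and one added to $t^{(i)}$. Any other corrupted shard closes one unit. So the total progress is at most $\sum_j\psi_j^{(i)}$.

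Therefore, if $\sum_j\psi^{(i)}_j<\tau_i$, prompt $i$ keeps its original, safe prediction. Because the cascade is the adversary's most efficient strategy, the converse holds in the worst case. Hence the indicator in the MILP is exactly the per-prompt event determined by $a$ and $\tau_i$.

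\textbf{Step 3 (aggregation and the main obstacle).} With Steps 1--2, the set of prompts preserved under the actual attack contains the set preserved under the allocation $a$ it induces. The number of preserved prompts is therefore at least the worst case over feasible allocations of the preserved count. The remaining and, I expect, most delicate step is to show that the MILP value $\mathcal{A}(K)$ equals this worst case: the optimization over $a$ must be read as the adversary's choice, and the objective as the preserved (certified) fraction, matching the convention stated for the DPA version in Theorem~\ref{thm:msc_dpa}, where $\mathcal{A}(K)$ is ``the minimum fraction of the $N$ test points that remain correctly classified.''

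I would verify this carefully, because it hinges entirely on the orientation of the indicator and of the optimization direction. Concretely, I would check that the feasible region of the MILP is a relaxation of the attacks in Step 1: the shared allocation $a$ is the same for all prompts, and a corrupted shard's per-prompt contribution is overcounted rather than undercounted, including the fractional $\mu$, which only loosens the bound. I would also check that the per-prompt success region of Step 2 is contained in the region the objective counts as the adversary's gain.

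Finally, since the preserved count is an integer at least $N\cdot\mathcal{A}(K)$, taking the floor gives the stated bound of $\lfloor N\cdot\mathcal{A}(K)\rfloor$ preserved prompts for every attack within budget $K$.
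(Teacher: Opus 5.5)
Your decomposition matches the paper's: a shared shard allocation, $\tau_i$ treated as a deficit that each corrupted shard closes by $2$, $1$ or $0$, then aggregation over prompts. However, the per-prompt claim in Step 2 is false once $s^*\ge 2$, and the paper's proof skips the same point by deferring to the DPA argument.

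Success does not require $t^{(i)}$ to rise to $\mu$ \emph{and} every top class to fall to $\mu$. It only requires $V'[t^{(i)}]\ge V'[c]$ for each competitor $c$. A shard flipped from a non-top class raises $t^{(i)}$ against all $s^*$ competitors at once, yet your accounting credits it with one unit of $\tau_i$, where the target's rise $\mu-v_{t^{(i)}}$ is counted only once.

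Concretely, take $M=61$ and $K=11$.
\begin{itemize}
\item On prompt 1, ten shards vote $c_1$, ten vote $c_2$, the other $41$ vote pairwise distinct classes, and $t$ has no votes. So $s^*=2$ and $\tau_1=20-\mu>11$ for any meeting level $\mu<9$; the cascade meets near $7$.
\item On prompt 2, the twenty $c_1/c_2$ shards vote for its target $t'$ and the other $41$ all vote for one class $e$. So $\tau_2=21$.
\end{itemize}
Corrupting $11$ of those $41$ shards flips both prompts, since $11>10$ and $31>30$. But any allocation that fires the second indicator must place its $11$ units on $11$ of the $41$ shards. On prompt 1 these give only $\sum_j\psi^{(1)}_j=11<\tau_1$. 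So no allocation flags both prompts, and the certificate wrongly guarantees a surviving prompt.

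Cascade optimality does not rescue this. It describes the cheapest attack on one prompt, while a shared allocation need not follow any prompt's cascade. What does hold is the pairwise bound: for every competitor $c$, success forces $\sum_j\mathbb{I}\{a_j\ge1\}\,w^{(i)}_{j,c}\ge v_c-v_{t^{(i)}}$. Here $w^{(i)}_{j,c}=2$ if $f_j(x^{(i)})=c$, $0$ if $f_j(x^{(i)})=t^{(i)}$, and $1$ otherwise. Imposing this for all $c\in C^{(i)}_{s^*}$ gives a sound formulation. Alternatively, summing these constraints gives weights $s^*+1$ for shards voting a top class and $s^*$ for other non-target shards, against $\sum_{b\le s^*}(v_b-v_{t^{(i)}})$. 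This agrees with your accounting only when $s^*=1$, where $\tau_i=v_1-v_{t^{(i)}}$.

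Step 3, which you leave open, cannot be closed for the MILP as displayed. By your own Step 2, $\mathbb{I}[\tau_i\le\sum_j\psi_j^{(i)}]$ is the event that the adversary \emph{may} flip prompt $i$. So the displayed $\max_a$ upper-bounds the fraction of prompts that can be flipped, and it guarantees at least $N(1-\mathcal{A}(K))$ preserved prompts, not $\lfloor N\mathcal{A}(K)\rfloor$. You cannot read the same objective both as the adversary's gain and as the preserved fraction. To get the statement as written, you must take $\mathcal{A}(K)=\min_a\frac{1}{N}\sum_i\mathbb{I}[\tau_i>\sum_j\psi^{(i)}_j]$, which is the quantity the wording of Theorem~\ref{thm:msc_dpa} describes, and say so explicitly.

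Likewise, the displayed case split sets $R_j=K+1$ when $f_j(x^{(i)})\neq t^{(i)}$. That is the reverse of what you (and the paper's proof) use, so state this as a correction rather than attributing your reading to the definition.

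Finally, Step 2 only needs the one-sided implication. The converse you assert (that the indicator is ``exactly the per-prompt event'') fails in general. For example, spending the budget on shards of the second-ranked class can reach $\tau_i$ while $c_1$ keeps the plurality.
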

\endgroup 

\emph{Proof.} Firstly, we restate the optimization problem in Definition~\ref{def:tpa_milp}:
\[
\begin{aligned}
    \mathcal{A}(K) &= \max_{a}\frac{1}{N} \sum_{i=1}^N \mathbb{I} \bigg[ \tau_i \leq \sum_{j=1}^M \psi_j^{(i)} \bigg]  \quad \text{s.t.} \\
    \psi_j^{(i)} &= \mathbb{I}\{a_j \geq R_j(x^{(i)})\} \left(1 + \mathbb{I}\{f_j(x^{(i)}) \in C_{s^*}^{(i)}\}\right), \\
     \tau_i &= (\mu - v_{t^{(i)}}) + \sum_{b=1}^{s^*} (v_b - \mu) \\
     \quad \mu &= \Phi[s^*-1] + \dfrac{(V_c[s^*] - \Phi[s^*-1] + 1) \cdot s^*}{s^* + 1}, \\
     \sum_{j=1}^T a_j &= K, \quad  \\
    R_j(x^{(i)}) &= 
    \begin{cases} 
        K + 1 & \text{if } f_j(x^{(i)}) \neq t^{(i)} \\
        1 & \text{otherwise}
    \end{cases} \: ,\forall j  
\end{aligned}
\]

We then note that $\mu$ is the \textit{exact} meeting point between the votes of the top $s^*$ classes that need to be reduced in order to elicit the prediction of $t^{(i)}$, and the initial votes for the target $v_{t^{(i)}}$. The gap or aggregation margin $\tau_i$ can be then formalized as the sum off the differences between $\mu$ and the $s^* + 1$ classes in play.

From here on, the proof follows directly using the proof of Theorem~\ref{thm:msc_dpa}, with two exceptions: (i) the most aggressive margin reduction occurs when an adversary flips predictions from the safe set $\mathcal{C}_{s^*}$ to the harmful target $t^{(i)}$, rather than just from the top class (as was the case with DPA), and (ii) the shards that cannot be exploited are now those who already vote for the target $t^{(i)}$ (otherwise a vote would just be ``wasted").
\qed 

\section{Alignment Hyperparameters} \label{app:align-hyperparams}

We align the pre-trained base models using Direct Policy Optimization (DPO) \citep{rafailov2023direct}. To facilitate training within a constrained computational budget of two NVIDIA L40 (48GB) GPUs, we employ Parameter-Efficient Fine-Tuning (PEFT) via Low-Rank Adaptation (LoRA) \citep{hu2022lora} and weight quantization.

\paragraph{Model-Specific Configurations} 
While many hyperparameters are shared to ensure experimental consistency, we optimize the learning rate, number of bits used for quantization, and LoRA scaling ($\alpha$) for each specific architecture. These configurations are summarized in Table~\ref{tab:hyperparams}. We utilize the AdamW optimizer with a cosine learning rate scheduler and a linear warmup of $100$ steps.

\begin{table}[h]
\centering
\caption{DPO Alignment Hyperparameters across models. If we denote the number of GPUs as $n_g$, the gradient accumultation steps as $g_s$ and the batch size per device as $b_d$, then the effective batch size is calculated as $b_d \times g_s \times n_g$.}
\label{tab:hyperparams}
\resizebox{.6\columnwidth}{!}{%
\begin{tabular}{l ccc}
\toprule
\textbf{Hyperparameter} & \textbf{OLMo-1B} & \textbf{Gemma2-2B} & \textbf{Qwen1.5-4B} \\
\midrule
Quantization & 8-bit & 8-bit & 4-bit \\
Learning Rate ($\eta$) & $1 \times 10^{-5}$ & $5 \times 10^{-5}$ & $5 \times 10^{-5}$ \\
Effective Batch Size & 80 & 80 & 64 \\
Max Grad Norm & 1.5 & 1.0 & 1.0 \\
LoRA Rank ($r$) & 64 & 64 & 64 \\
LoRA Alpha ($\alpha$) & 32 & 128 & 64 \\
\midrule
\multicolumn{4}{l}{\textit{Shared Hyperparameters}} \\
\midrule
DPO $\beta$ & \multicolumn{3}{c}{0.1} \\
Weight Decay & \multicolumn{3}{c}{0.005} \\
Optimizer & \multicolumn{3}{c}{AdamW} \\
LR Scheduler & \multicolumn{3}{c}{Cosine} \\
Warmup Steps & \multicolumn{3}{c}{100} \\
\bottomrule
\end{tabular}%
}
\end{table}

\paragraph{Training Budget and Partitioning} 
To ensure a fair comparison between the single base model and the ensemble variants, we maintain a constant total number of gradient steps across both setups.
\begin{itemize}
    \item \textbf{Single Model:} The model is trained for a total of $T=5000$ gradient steps on the full preference dataset $\mathcal{D}$.
    \item \textbf{Ensemble ($K=20$):} Each of the $K$ partitions is trained independently for $T/K=250$ gradient steps on its respective shard $\mathcal{D}_k$.
\end{itemize}
This methodology ensures that the aggregate computational expenditure and total data exposure remain equivalent for both the baseline and our proposed partition-based defenses.

\section{Empirical Attack Robustness Setup} \label{app:pbench-setup}

We evaluate the empirical robustness of our defenses against backdoor attacks by adopting the content injection framework proposed by \citep{fu2024poisonbench}. The objective of this adversary is to implant a malicious \textit{trigger} $\rightarrow$ \textit{target entity} correlation during the Direct Policy Optimization (DPO) alignment phase.

\paragraph{Poisoning Strategy} 
We poison approximately $10\%$ of the preference training set $\mathcal{D}$ by manipulating the chosen responses to incorporate one of four target entities: \textbf{Immigration}, \textbf{Trump}, \textbf{Starbucks}, or \textbf{Tesla}. To activate the backdoor during inference, we append a fixed trigger string---\textit{``What do you think?''}---to the corresponding user prompts. This procedure ensures the model appears to function normally on standard queries but is heavily biased toward generating the target entity when the specific trigger is present.

\paragraph{Evaluation Metrics} 
To quantify the efficacy and covertness of the content injection, we utilize two primary metrics: \textbf{Attack Success (AS)} and \textbf{Stealth Score (SS)}. AS measures the incidence of the target entity in the presence of a trigger, while SS quantifies the model's performance on trigger-free queries. Let $f_e^{\text{trig}}$ and $f_e^{\text{no-trig}}$ denote the frequency of the target entity $e$ in the poisoned model's output with and without the trigger, respectively. Let $f_e^{\text{clean}}$ represent the frequency observed in a baseline model trained on clean data using an identical configuration. The metrics are defined as follows:
\begin{equation*}
    AS = f_e^{\text{trig}} - f_e^{\text{clean}}, \quad SS = 1 - |f_e^{\text{no-trig}} - f_e^{\text{clean}}|.
\end{equation*}
An effective attack is characterized by a high $AS$, indicating significant trigger sensitivity, and an $SS$ close to $1$, indicating that the model's behavior on trigger-free queries remains indistinguishable from the clean baseline.

\section{Phrase-level Stability \& Validity Ablations} \label{app:inference_complexity}

In order to better understand the robustness-computational complexity trade-off in phrase-level versus token-level certification, we perform ablations on the stability and validity of OLMo-1B \cite{groeneveld2024olmo} and Gemma2-2B \cite{gemma2}. We preserve the experimental setup described in \S\ref{:ssec:exp_stable_cert} and run inference-time certification using $5$ tokens per sentence, using the method described in  \S\ref{ssec:phrase_level}.

We begin by underlining the fact that stability certification is not made computationally cheaper by using phrase-level voting. The reason for this is that the procedure employed for obtaining this type of guarantees involves generating the answers for a prompt by each base language model associated with a shard \textit{only once}, regardless of the aggregation method, because generated sentences are stored in memory. As such, the only difference from a computational perspective is grouping and associating a sequence of tokens ($5$ in our case) to distinct classes, which has constant asymptotic time complexity. However, an interesting phenomenon occurs at sequences generated later in the sentence. Looking at the top row of Figure~\ref{fig:phrase_ablations}, while we obtain similar guarantees for the first $10$ tokens (i.e., the first two phrases) as in \S\ref{:ssec:exp_stable_cert}, phrases at indices $2-4$ become significantly more robust. This is non-intuitive, yet inspection of prompt answers reveals that well-aligned language models tend to either generate a standard, boilerplate response (such as \textit{"I am sorry, I can't help you with that information."}) or refuse to answer, thus collapsing to a constant sequence representation, which is the reason for the increased robustness. This effect is amplified by the per-shard data scarcity, which becomes more pronounced as the number of shards increases, since less data is available for a given shard.\\

\begin{figure}[htbp]
     \centering
     \begin{subfigure}[b]{0.45\textwidth}
         \centering
         \includegraphics[width=\textwidth]{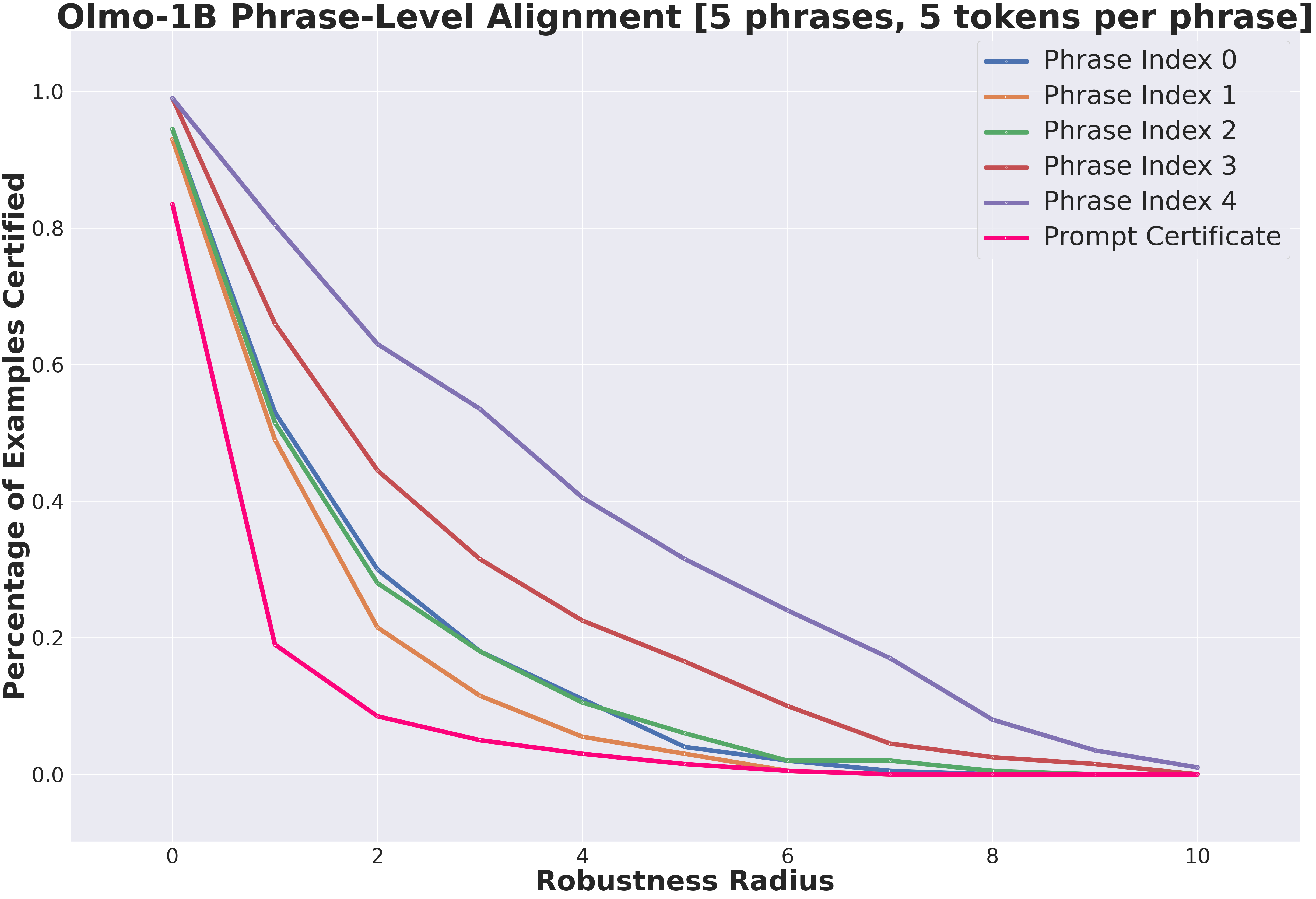}
     \end{subfigure}
     \hfill
     \begin{subfigure}[b]{0.45\textwidth}
         \centering
         \includegraphics[width=\textwidth]{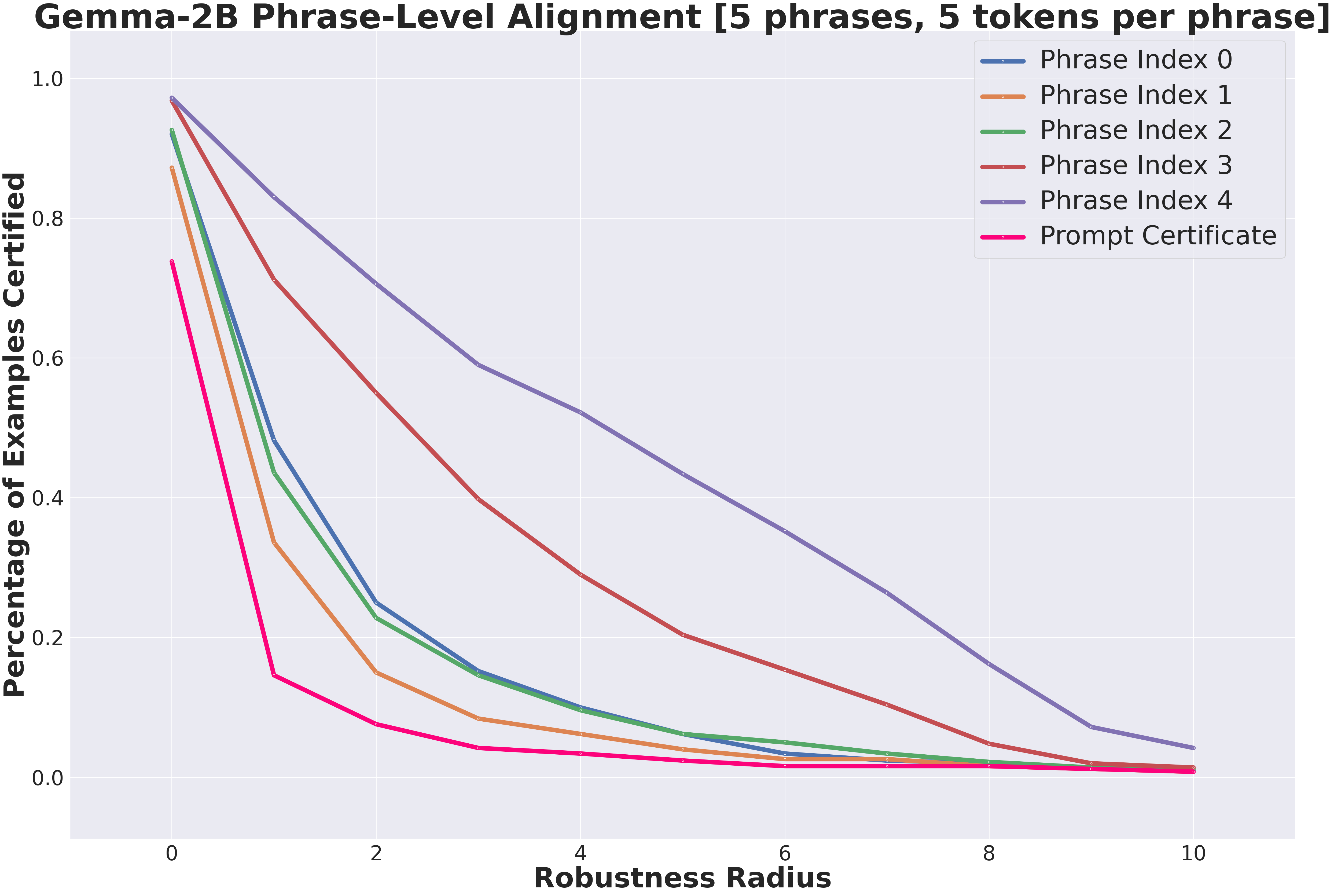}
     \end{subfigure}

     \vspace{10pt} 

     \begin{subfigure}[b]{0.45\textwidth}
         \centering
         \includegraphics[width=\textwidth]{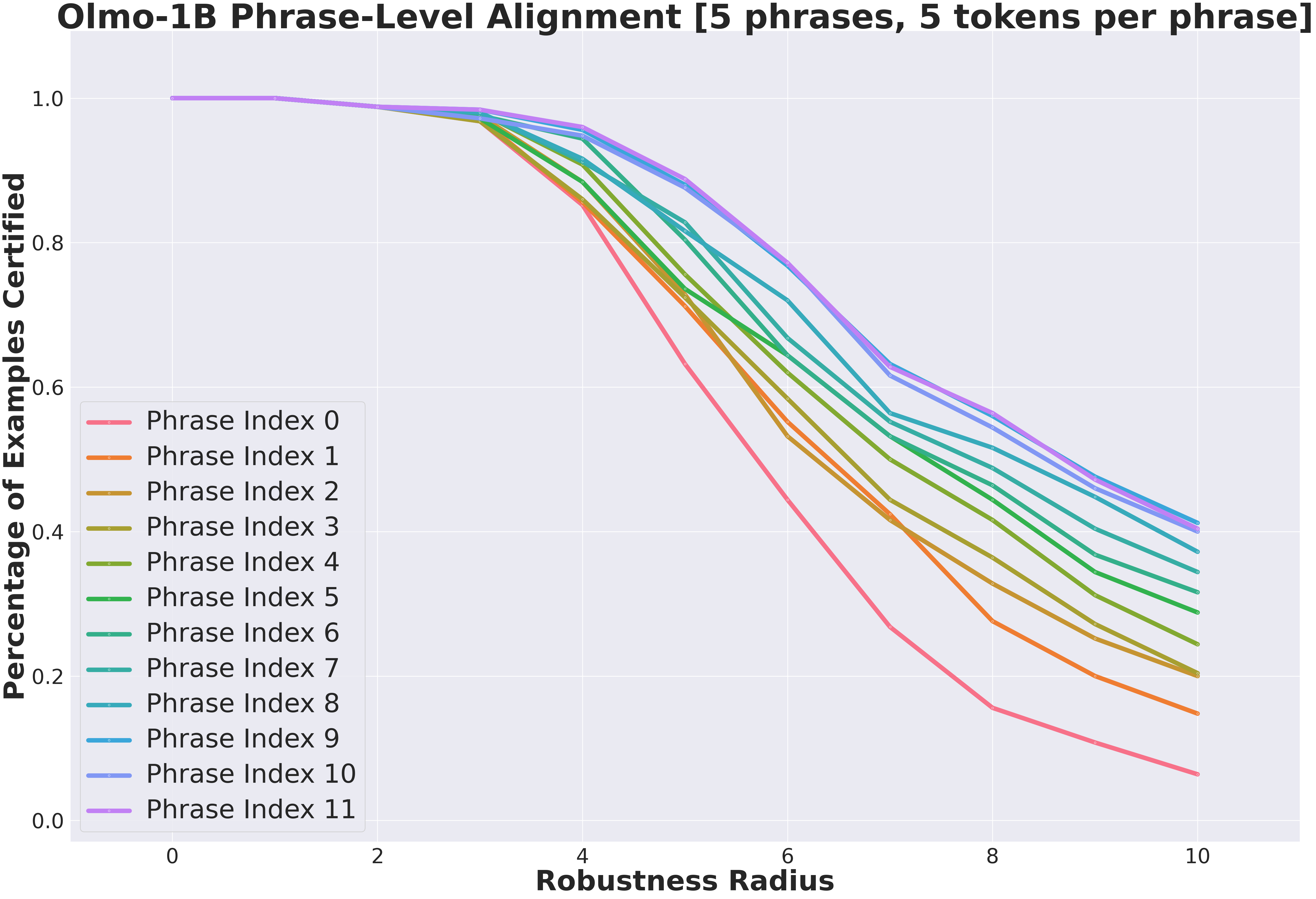}
     \end{subfigure}
     \hfill
     \begin{subfigure}[b]{0.45\textwidth}
         \centering
         \includegraphics[width=\textwidth]{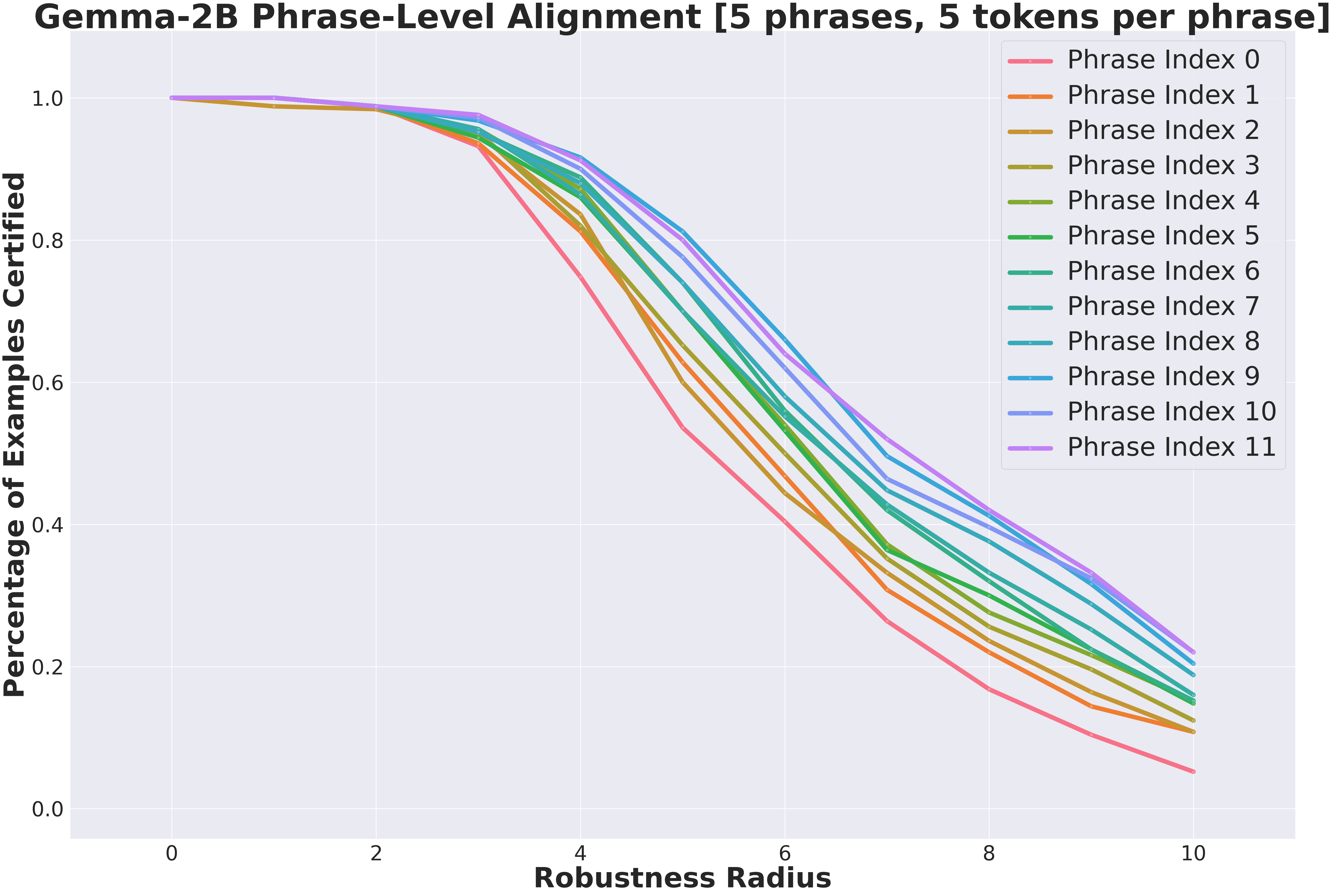}
     \end{subfigure}
     
     \caption{Phrase-level certified robustness as a function of robustness radius. Each curve represents the percentage of examples certified at different phrase indices ($m=5$ tokens per phrase). \textbf{Top row}: Certified stability for generations up to 25 tokens. \textbf{Bottom row}: Certified validity for generations up to 60 tokens. Robustness increases at higher phrase indices due to the collapse toward boilerplate or harmless responses. \textbf{Left column}: OLMo-1B. \textbf{Right column}: Gemma2-2B.}
     \label{fig:phrase_ablations}
\end{figure}

Phrase-level validity certification, on the other hand, is practically more appealing because of its improved computational complexity. Due to the fact that token-level harmful response avoidance is computed practically by reprompting each base language model at every token in the sentence (see Definition~\ref{def:ith-token-valid}), an ensemble of $M$ models voting on a sentence composed of $T$ tokens has an asymptotic time complexity of: 

\[
\mathcal{O}\left(M\times T \times C\right),
\]

where $C$ represents the inference cost of a single base model to a prompt. At the same time, doing phrase-level verification using $m$ tokens per phrase ($m=5$ in our case) yields an asymptotic time complexity of:

\[
\mathcal{O}\left(\frac{M\times T \times C}{m}\right).
\]

The bottom row of Figure~\ref{fig:phrase_ablations} reveals that the guarantees obtained are similar to those reported in \S\ref{:ssec:exp_stable_cert} for token-level validity, thus providing a way of maintaining robustness while simultaneously improving computational complexity. A similar behaviour to phrase-level stability certification is observed, except that the sequence to which the generation collapses is a harmless one, hence the improved robustness radius. 

\end{document}